\title{Semantic Volume: Quantifying and Detecting both External and Internal Uncertainty in LLMs}
\author{
    Xiaomin Li\textsuperscript{\rm 1}\thanks{Correspondence to: Xiaomin Li (xiaominli@g.harvard.edu).},
    Zhou Yu\textsuperscript{\rm 2},
    Ziji Zhang\textsuperscript{\rm 2},
    Yingying Zhuang\textsuperscript{\rm 2},
    Swair Shah\textsuperscript{\rm 2},\\
    Narayanan Sadagopan\textsuperscript{\rm 2},
    Anurag Beniwal\textsuperscript{\rm 2}
}
\newcommand{\R}{\mathbb{R}}
\newcommand{\E}{\mathbb{E}}
\def\vu{{\bm{u}}}
\def\vv{{\bm{v}}}
\def\vx{{\bm{x}}}
\def\bvx{\bar{\bm{x}}}
\def\bvv{\bar{\bm{v}}}
\def\mA{{\bm{A}}}
\def\mI{{\bm{I}}}
\def\mM{{\bm{M}}}
\def\mP{{\bm{P}}}
\def\mS{{\bm{S}}}
\def\mV{{\bm{V}}}
\def\mX{{\bm{X}}}
\def\mSigma{{\bm{\Sigma}}}
\newcommand{\calD}{\mathcal{D}}
\newcommand{\calL}{\mathcal{L}}
\newcommand{\calM}{\mathcal{M}}
\newcommand{\calN}{\mathcal{N}}
\newcommand{\calH}{\mathcal{H}}
\newcommand{\bydef}{\stackrel{\text{def}}{=}}
\DeclareMathOperator{\Tr}{Tr}
\renewcommand{\P}{\mathbb{P}}
\newtheorem{theorem}{Theorem}
\newtheorem{lemma}{Lemma}
\newtheorem{proposition}{Proposition}
\begin{document}

\maketitle

\begin{abstract}
Large language models (LLMs) have demonstrated remarkable performance across diverse tasks by encoding vast amounts of factual knowledge. However, they are still prone to hallucinations, generating incorrect or misleading information, often accompanied by high uncertainty. Existing methods for hallucination detection primarily focus on quantifying \textit{internal uncertainty}, which arises from missing or conflicting knowledge within the model. However, hallucinations can also stem from \textit{external uncertainty}, where ambiguous user queries lead to multiple possible interpretations. In this work, we introduce \textit{Semantic Volume}, a novel mathematical measure for quantifying both external and internal uncertainty in LLMs. Our approach perturbs queries and responses, embeds them in a semantic space, and computes the Gram matrix determinant of the embedding vectors, capturing their dispersion as a measure of uncertainty. Our framework provides a generalizable and unsupervised uncertainty detection method without requiring internal access to LLMs. We conduct extensive experiments on both external and internal uncertainty detections, demonstrating that our Semantic Volume method consistently outperforms existing baselines in both tasks. Additionally, we provide theoretical insights linking our measure to differential entropy, unifying and extending previous sampling-based uncertainty measures such as the semantic entropy. Semantic Volume is shown to be a robust and interpretable approach to improving the reliability of LLMs by systematically detecting uncertainty in both user queries and model responses.
\end{abstract}


\begin{links}
  \link{Code}{https://github.com/amazon-science/semantic-volume}
\end{links}

\section{Introduction}\label{sec:Introduction}
Large language models encode extensive knowledge from massive training data and have shown remarkable achievements on diverse tasks \citep{brown2020language, achiam2023gpt, touvron2023llama, llama3modelcard, anthropic_claude3_2023, guo2025deepseek, anil2023gemini}. Despite their success, LLMs still exhibit hallucination: generating information or conclusions that are incorrect, incomplete, fabricated, or misleading \citep{ji2023survey, huang2023survey, bang2023multitask, guerreiro2023hallucinations, chen2022towards, bastounis2024consistent}. These hallucinations can propagate false information, undermine decision-making, and damage the credibility of AI systems. Detecting the hallucination is a challenging task, and a growing stream of research leverages the uncertainty in LLMs for hallucination detection \citep{kuhn2023semantic, farquhar2024detecting, cole2023selectively, kadavath2022language, malinin2020uncertainty, fomicheva2020unsupervised, kossen2024semantic, lin2023generating, liu2024uncertainty, quevedo2024detecting}. Existing methods focus on \textbf{internal uncertainty}, which generally arises from missing relevant knowledge, conflicting information, or outdated data in the training corpus, and is assumed to reflect the model’s intrinsic limitations \citep{kuhn2023semantic, farquhar2024detecting, cole2023selectively, kossen2024semantic}. Nonetheless, such internal confusion, and consequently hallucinations, can also stem from \textbf{external uncertainty}, which occurs when the user's query is ambiguous, such as lacking context or having multiple possible interpretations due to typos, missing information, or ambiguous entities \citep{zhang2024clamber, min2020ambigqa, kuhn2022clam, kim2024aligning, chi2024clarinet, lee2023asking}. External uncertainty cases should be handled by requesting clarification from the user \citep{zhang2024clamber, min2020ambigqa, kuhn2022clam, lee2023asking}. For example, when asked the ambiguous question ``Who played Spiderman?", an LLM should ask the user to specify which movie they are referring to. It is important to note that internal uncertainty reflects true limitations of the model only after external uncertainty has been ruled out. To detect external uncertainty, current methods often rely on LLMs themselves to assess ambiguity via specialized prompting strategies \citep{kuhn2022clam, kim2024aligning, chi2024clarinet, zhang2024clamber}. In contrast, internal uncertainty (response uncertainty) detection follows two main paradigms: 1. \textbf{Probability-based} methods that utilize the token probabilities or entropy, requiring internal access to the model \citep{kadavath2022language, malinin2020uncertainty, quevedo2024detecting, ji2024llm}. 2. \textbf{Sampling-based} approaches, which sample multiple responses and propose measures to quantify uncertainty \citep{cole2023selectively, fomicheva2020unsupervised, zhang2023clarify, kuhn2023semantic, farquhar2024detecting, kossen2024semantic, lin2023generating}. A representative method in this category is the \textit{Semantic Entropy}, which clusters sampled responses into semantic equivalence classes and computes entropy across these clusters \citep{kuhn2023semantic, farquhar2024detecting}.

In this work, we introduce a unified sampling-based approach called the \textit{Semantic Volume}, which can generally be applied to detect both internal and external uncertainty in LLMs without requiring access to the model weights (see Figure~\ref{fig:pipeline}).  Our method generates perturbations of queries and responses, obtains their semantic embedding vectors,  and use a mathematical measure that essentially computes the determinant of the Gram matrix formed by the vectors, in order to quantify the semantic dispersion. Larger dispersion indicates higher uncertainty. More precisely, for external uncertainty, we prompt the LLM to generate multiple augmented versions of each query as perturbations, while for internal uncertainty, we sample multiple responses as perturbations. Then we take the normalized embedding vectors to be their representations. Putting these vectors as column vectors in a matrix $\mV$, the value $\det (\mV^\top \mV)$ mathematically measures the squared volume of the parallelepiped formed by these vectors. The log of this value essentially defines our Semantic Volume (see \ref{subsec:Definitions}). The idea is that if uncertainty is low, all perturbations should be similar or close to each other, resulting in a small dispersion (i.e., a smaller volume). 

We conduct comprehensive experiments on both query ambiguity detection and response uncertainty detection, demonstrating that our semantic volume method outperforms various baselines in both tasks. Additionally, we provide theoretical justification showing that our measure essentially captures the differential entropy of perturbation vectors, effectively quantifying overall semantic dispersion. Notably, the previously state-of-the-art \textit{Semantic Entropy} method emerges as a special case of our approach, highlighting the broader generalization of Semantic Volume over existing sampling-based methods. Our findings suggest that Semantic Volume provides a robust, interpretable framework for improving LLM reliability by systematically detecting and addressing uncertainty in both user queries and model responses. Below is a list of our main contributions:
\begin{itemize}
    \item We propose a novel mathematical measure for uncertainty detection, using the determinant of the Gram matrix (equivalently, parallelepiped volume) of embedding vectors to quantify semantic dispersion.
    \item Our method is \textbf{training-free} and \textbf{does not require internal access} to the model's hidden states or token probabilities).
    \item To the best of our knowledge, this is the \textbf{first framework to study both external and internal uncertainty} in LLMs.
    \item We validate our approach through comprehensive experiments on both external and internal uncertainty detection, demonstrating superior performance compared to various baselines.
    \item We provide \textbf{theoretical interpretations} of our semantic volume, linking it to differential entropy and generalizing existing sampling-based uncertainty measures.
\end{itemize}

\begin{figure*}[t!]
    \centering
   \includegraphics[width= 1.0\linewidth]{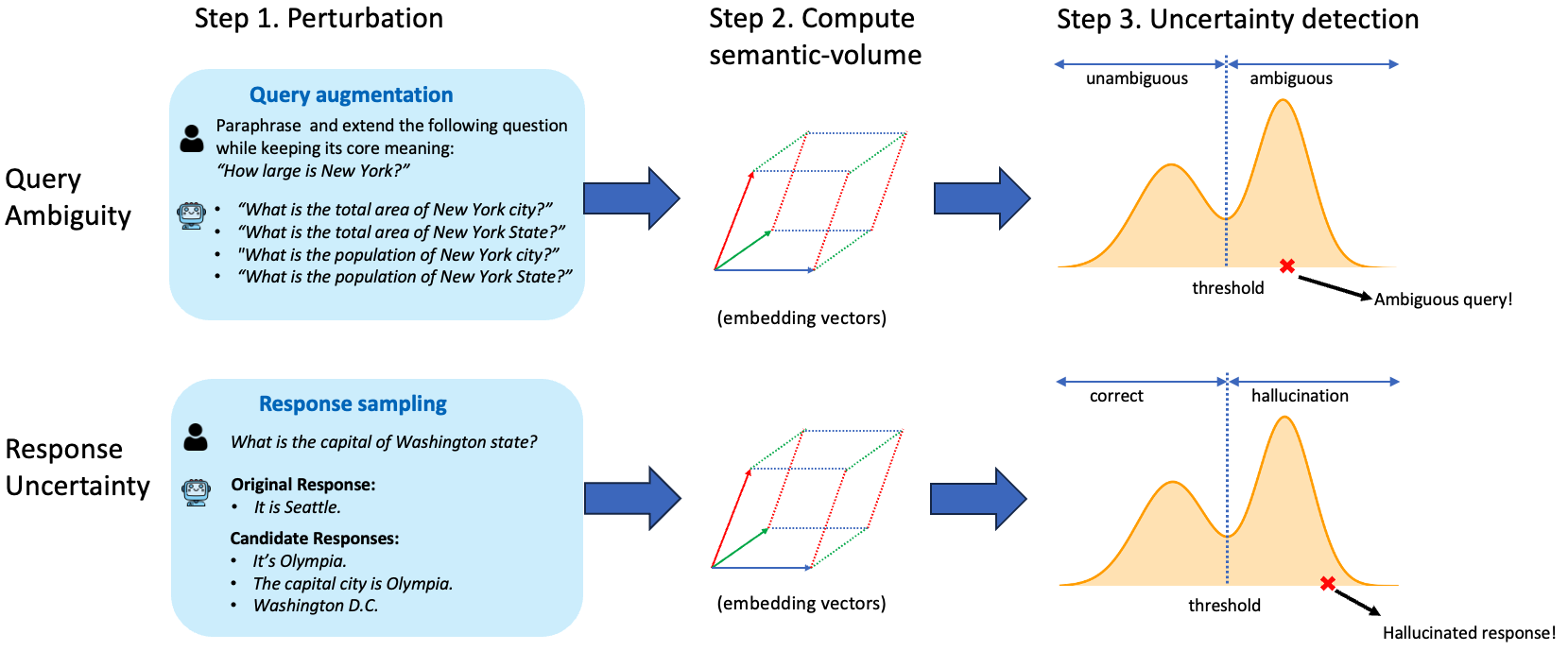}
    \caption{Pipeline for external and internal uncertainty detection using semantic volume. \textit{Step 1.} Generate perturbations. For external uncertainty, we augment each query using an LLM, treating the augmentations as perturbations. For internal uncertainty, perturbations refer to multiple sampled candidate responses.  \textit{Step 2.} Compute semantic volume (essentially $\log \det (\mV^\top \mV)$ where columns of $\mV$ are normalized embedding vectors). \textit{Step 3.} Cases with high semantic volume are predicted as ambiguous queries (external uncertainty) or hallucinated responses (internal uncertainty).}
    \label{fig:pipeline}
\end{figure*}

\section{Related Work}\label{sec:RelatedWork} 

\subsection{Hallucination} \label{subsec:RelatedWork-Hallucination}
LLM hallucinations occur when the model generates incorrect, incomplete, fabricated, or misleading outputs \citep{ji2023survey, huang2023survey, bang2023multitask, guerreiro2023hallucinations, chen2022towards}. Generally the hallucination can be caused by the lack of knowledge of the LLM itself (internal uncertainty), but could also originate from the ambiguity in the user's query (external uncertainty). Most LLMs are not explicitly trained to handle ambiguous queries and often generate incorrect responses instead, leading to hallucinations \citep{kim2024aligning}. Addressing external uncertainty requires a different approach, such as prompting the LLM to ask clarification questions before generating a response \citep{zhang2024clamber, min2020ambigqa, kuhn2022clam, lee2023asking}. In contrast, internal uncertainty caused by knowledge gaps can be mitigated through methods like retrieval-augmented generation \citep{lewis2020retrieval}, reasoning-based techniques \citep{wei2022chain, guo2025deepseek, openai2024learning}, or simply turning to stronger LLMs or human agents.

\subsection{External Uncertainty} \label{subsec:RelatedWork-External}
Query ambiguity detection is typically performed using LLMs with various prompting techniques \citep{kuhn2022clam, kim2024aligning, min2020ambigqa, zhang2024clamber, chi2024clarinet, yin2023large, lee2023asking}. For instance,  \citet{kuhn2022clam} uses LLM prompting for both detecting ambiguity and generating clarification questions. \citet{kim2024aligning} prompts LLM to disambiguate question $x$ itself, and then measure the difference between the $x_{new}$ and $x$. Larger difference above a threshold indicates ambiguity. \citet{min2020ambigqa} introduced the AmbigQA dataset, which contains ambiguous queries and their answers. Following this, \citet{zhang2024clamber} proposed an ambiguity taxonomy and introduced the CLAMBER benchmark with binary ambiguity labels. They further evaluated various models on ambiguity detection under different settings, including zero-shot vs. few-shot and chain-of-thought (CoT) prompting vs. standard prompting.

\subsection{Internal Uncertainty} \label{subsec:RelatedWork-Internal}
Many studies propose uncertainty measures for hallucination detection \citep{kuhn2023semantic, farquhar2024detecting, cole2023selectively, kadavath2022language, malinin2020uncertainty, fomicheva2020unsupervised, kossen2024semantic, lin2023generating, liu2024uncertainty, quevedo2024detecting}; similar to these, we focus on cases where LLM mistakes coincide with high uncertainty, and cases where an LLM hallucinates with high confidence are beyond the scope of this paper. There are generally two genres: \textbf{probability-based}, using information such as the token probability or entropy, and \textbf{sampling-based}, which samples more responses and measure the dispersion of the answers.

\textbf{Probability-based.}
\textit{Last Token Entropy}, which essentially uses the entropy of the vocabulary distribution at the last token, is a widely used measure of uncertainty \citep{malinin2020uncertainty}.  \textit{Log Probabilities} average log conditional token probabilities \citep{malinin2020uncertainty, quevedo2024detecting}. \citet{quevedo2024detecting} tries multiple ways to aggregate the token probabilities, such as the minimal and averaged token probabilities.

\textbf{Sampling-based.}
\citet{kuhn2023semantic} proposed \textit{Semantic Entropy} to measure uncertainty in natural language and \citet{farquhar2024detecting} applied it to detect hallucinations in large language models. Essentially for each query, they generate multiple answers and then cluster them by the same semantic meanings. Then discrete entropy calculated from the sizes of different clusters is defined as the semantic entropy.  In \citet{kadavath2022language} and \citet{cole2023selectively}, they sample multiple answers and let LLM to judge the uncertainty based on these answers. this method is called \textit{p(True)} in \citet{kuhn2023semantic}. The \textit{Lexical Similarity} method \citep{fomicheva2020unsupervised, grewal2024improving}  considers the averaged similarity of the sampled answers, and lower similarity indicates higher dispersion. 

\section{Method}

\subsection{Definitions and Notations:} \label{subsec:Definitions}
Denote $[k] \bydef \{1, 2, \dots, k\}$ for any $k \in \mathbb{N}$. For the task of external uncertainty detection, we denote the query dataset by $\calD_Q = \{q_i\}_{i\in [N_Q]}$, along with a tiny labeled subset $\calL_Q \subseteq \calD_Q$ (used to determine optimal semantic volume threshold and each query is assigned a binary label indicating whether it is ambiguous). For internal uncertainty detection, we define the query-response dataset as $\calD_R = \{(q_i, r_i)\}_{i\in [N_R]}$ with a labeled subset $\calL_R \subseteq \calD_R$. Since our method and analysis apply to both tasks, we often drop the subscript and use the general notation  $\calD = \{s_i\}_{i\in [N]}$ and $\calL \in \calD$, where $s_i$ represents a query for external uncertainty and a query-response pair for internal uncertainty.

\textbf{Volume.}
Given normalized embedding vectors $\mV = [\vv_1 \vv_2 \dots \vv_n]$ (each $\|\vv_i\|=1$), we define the squared volume as: 
\begin{equation}\label{eq:volume}
    \mathbf{Vol}^2(\mV) \bydef \det(\mV^\top \mV).
\end{equation}
The term ``volume'' originates from the fact that geometrically, $\sqrt{\det(\mV^\top \mV)}$ represents the volume of the parallelepiped spanned by vectors $\{\vv_i\}$. For example, in the three-dimensional case, where $\mV = [\vv_1\ \vv_2\ \vv_3]$, it can be verified that  $\sqrt{\det(\mV^\top \mV)}$ precisely computes $|\vv_1^\top (\vv_2 \times \vv_3)|$, which corresponds to the volume of the three-dimensional parallelepiped formed by $\{\vv_1, \vv_2, \vv_3\}$. A more detailed discussion on the geometric interpretation of this measure is provided in Appendix~\ref{sec:Appendix-GeometryOfVolume}.

\textbf{Semantic Volume.}
To avoid numerical singularities from duplicate embeddings (e.g., two sampled responses or extended queries are identical), we add a small perturbation $\epsilon\mI$ with $\epsilon = 10^{-10}$ and compute $\det(\mV^\top \mV+  \epsilon\mI)$ to maintain numerical stability (in Appendix~\ref{sec:Appendix-Epsilon}, we show that $\epsilon$ is negligible to the spectral norm  $\|\mV^\top \mV\|$ and hence it only serves to ensure numerical stability and does not affect the quantification). In practice, the absolute values of squared volumes are often small due to the nature of our perturbations. Therefore we take the logarithm, leading to the formulation
\begin{equation}\label{eq:log-vol}
    \log \mathbf{Vol}^2(\mV) \bydef \log \det\left(\mV^\top \mV + \epsilon\, \mI_n\right)
\end{equation}
Moreover, we apply Principal Component Analysis (PCA) to reduce dimensionality by projecting the vectors onto the top $d$ principal components, obtaining $\tilde{\mV} = [\tilde{\vv}_1 \tilde{\vv}_2 \dots \tilde{\vv}_n]  \in \R^{d \times n}$, where each $\tilde{\vv}_i \bydef \mP_{PCA}\vv_i$. Here, $\mP_{PCA}$ is the projection matrix. This results in the general form of our final semantic volume measure:
\begin{equation}
\label{eq:semantic-vol}
\begin{aligned}
    \text{SemanticVolume}(\mV) 
    &\bydef \log \mathbf{Vol}^2(\mP_{PCA} \mV) \\
    &= \log \det\left(\tilde{\mV}^\top \tilde{\mV} + \epsilon\, \mI_n\right)
\end{aligned}
\end{equation}

\subsection{Semantic Volume Uncertainty Detection Algorithm:}
Our algorithm using semantic volume to detect high uncertainty is outlined below (the overall pipeline is illustrated in Figure~\ref{fig:pipeline} and the detailed pseudocode is provided in Algorithm~\ref{algo}). Additionally, we present a  Case Study in Appendix~\ref{sec:Appendix-CaseStudy}, analyzing representative examples of queries and responses exhibiting high versus low uncertainty.

\noindent\textbf{Step 1: Augmentation.}
For each $s \in \calD$, we augment it with $n$ perturbations. Precisely, for external uncertainty, we prompt an LLM to augment/paraphrase each query $s$ to obtain $n$ perturbed versions, while for internal uncertainty, we sample $n$ candidate responses.

\noindent\textbf{Step 2: Compute semantic volume.}
We obtain embedding vectors using Sentence-Transformer \citep{reimers2019sentence}, normalize them, and apply PCA dimension reduction, yielding $\tilde{\mV} = [\tilde{\vv}_1 \tilde{\vv}_2 \dots \tilde{\vv}_n]  \in \R^{d \times n}$ for $n$ perturbations. Then compute the semantic volume according to \eqref{eq:semantic-vol}.

\noindent\textbf{Step 3: Uncertainty detection.}
A higher semantic volume indicates greater uncertainty. To determine the optimal threshold $\tau^*$, we use a tiny labeled random subset $\calL \subseteq \calD$ with size 100 for threshold tuning \footnote{Note that this labeled subset is only used for finding a more precise threshold. In practice, one can consider a completely unsupervised setting and use a simpler heuristic threshold such as the median.} (the exact formula for $\tau^*$ is characterized in Proposition~\ref{prop:OptimalTauFormula}). Finally, we classify the entire dataset $\calD$ by assigning binary uncertainty labels based on the semantic volume threshold.

\section{Experiment: External Uncertainty}\label{sec:Experiment-External}

\subsection{Experimental Setup}\label{subsec:ExperimentSetup-External}
\noindent \textbf{Data.}
We use two benchmark datasets. The first is CLAMBER \citep{zhang2024clamber}, a balanced dataset of 3K queries, each annotated with a binary label indicating whether it is ambiguous. The second is a balanced subset of 5K samples from the AmbigQA dataset \citep{min2020ambigqa}. Each question in AmbigQA is labeled as either unambiguous or ambiguous; ambiguous questions are further annotated with multiple disambiguated versions and corresponding answers, each reflecting a distinct plausible interpretation.

\noindent \textbf{Models.}
For query augmentation, we use \texttt{Claude3.5-Sonnet} \citep{anthropic_claude3_2023} and the prompt is provided in Appendix~\ref{sec:Appendix-Prompts}. \texttt{Qwen2-1.5B-instruct} \citep{yang2024qwen2technicalreport} is used as the sentence-transformer to generate embeddings.

\noindent \textbf{Evaluation.}
We conduct experiments for binary classification tasks on ambiguity of the queries. The performance is assessed by comparing the predicted binary labels against the ground truth labels, reporting both accuracy and F1 score.

\noindent \textbf{Baselines.}
Note that the sampling-based methods discussed in Section~\ref{subsec:RelatedWork-Internal} can be naturally extended to query ambiguity detection if we can generate analogous perturbations of the queries, similar to how candidate responses are sampled in response uncertainty detection. Below, we outline the baseline methods we consider. Some of these were originally designed for response uncertainty, but we readily adapt their methodology to query ambiguity.
\begin{itemize}
    \item \textit{Type 1: Prompting-based.} 
    Directly prompt LLMs to determine whether a given query is ambiguous. We evaluate the following models: Vicuna-13B \citep{vicuna2023blog}, Llama2-13B-Instruct \citep{touvron2023llama2}, Llama2-70B-Instruct \citep{touvron2023llama2}, Llama3.2-3B-Instruct \citep{MetaAI2024Llama3.2}, and ChatGPT \citep{achiam2023gpt}. 
    We also include ChatGPT results from CLAMBER using few-shot and chain-of-thought prompting \citep{zhang2024clamber}.

    \item \textit{Type 2: Probability-based}. 
    Using the token probabilities to quantify uncertainty. These methods require access to the model’s internal token probabilities. We consider the follow methods and use \texttt{Llama3.2-1B-Instruct} \citep{MetaAI2024Llama3.2} to obtain the token probabilities. (a) \textbf{Last Token Entropy} \citep{kadavath2022language, arora2021types, malinin2020uncertainty}: computes the entropy of the vocabulary distributions at the last token of the query. (b) \textbf{Log Probabilities} \citep{malinin2020uncertainty, quevedo2024detecting}: measures uncertainty by computing the log of the product of conditional token probabilities, which is equivalent to summing the log conditional probabilities across all tokens in the query.

    \item \textit{Type 3: Sampling-based}. These methods originally measure the variation of sampled responses to quantify response uncertainty. Here for query ambiguity, we adapt and apply them to perturbed variations of queries.
    (a) \textbf{p(True)} \citep{kadavath2022language, cole2023selectively}: the original p(True) method quantifies uncertainty based on the probability of the LLM's output. We adopt it to directly use the LLM's answer and compare it against the ground truth binary labels.
    (b) \textbf{Lexical Similarity} \citep{lin2023generating, fomicheva2020unsupervised}: computes the averaged pairwise similarity of perturbed queries.  
    (c) \textbf{Semantic Entropy} \citep{kuhn2023semantic}: clusters the perturbations into semantic equivalence classes and computes the entropy over the clusters.
\end{itemize}

\subsection{Results}
The performance of our method and baseline approaches on the query ambiguity classification task is presented in Table~\ref{tab:Results-External}. Here we choose $n=20$ for the augmentations for queries (the discussion on varying the perturbation size $n$ is provided in Appendix~\ref{sec:Appendix-Variation-n}). The original CLAMBER dataset includes a diverse range of ambiguities, such as queries involving unfamiliar entities, self-contradictions, multiple meanings, and missing context. We observe that identifying ambiguous queries remains challenging for LLMs, even for powerful models like ChatGPT, despite various prompting strategies (few-shot and CoT). Among the baselines, probability-based methods achieve higher F1 scores but a critical limitation of them is that they require access to token probabilities,  which is not provided for most of the close-sourced models.  Among sampling-based methods, semantic entropy generally performs better. Nonetheless, our semantic volume method significantly outperforms all three categories of baselines, demonstrating its effectiveness in detecting ambiguous queries. For AmbigQA task, results indicate that ambiguity detection on this dataset is generally more challenging compared to CLAMBER. Nonetheless, our Semantic Volume method continues to outperform the baseline methods.

\begin{table*}[t!]
\centering
\renewcommand{\arraystretch}{1.09}   
\resizebox{0.61\textwidth}{!}{%
\begin{tabular}{l|cc|cc}
\hline
\multirow{2}{*}{\textbf{Method}} &
\multicolumn{2}{c|}{\textbf{CLAMBER}} &
\multicolumn{2}{c}{\textbf{AmbigQA}} \\ \cline{2-5}
& \textbf{Acc.} & \textbf{F1} & \textbf{Acc.} & \textbf{F1} \\ \hline
Vicuna-13B (zero-shot)                 & 50.6 & 39.9 & 45.1 & 19.5 \\
Llama2-13B-Instruct (zero-shot)        & 45.6 & 43.6 & 46.2 & 22.4 \\
Llama2-70B-Instruct (zero-shot)        & 50.3 & 34.2 & 49.4 & \underline{64.5} \\
Llama3.2-3B-Instruct (zero-shot)       & 51.5 & 37.7 & 49.3 & 33.8 \\
ChatGPT (zero-shot)                    & 54.3 & 53.4 & 54.8 & 52.8 \\
ChatGPT (few-shot)                     & 51.6 & 49.2 & \underline{54.9} & 53.1 \\
ChatGPT (zero-shot + CoT)              & \underline{57.3} & 56.9 & 54.2 & 55.1 \\
ChatGPT (few-shot + CoT)               & 53.6 & 51.4 & 54.0 & 50.8 \\ \hline
Last Token Entropy                     & 52.2 & \underline{67.3} & 48.1 & 63.2 \\
Log Probabilities                      & 45.5 & 66.0 & 50.3 & 62.8 \\ \hline
pTrue (Llama3-8B-Instruct)             & 52.6$_{1.19}$ & 53.2$_{1.44}$ & 50.1$_{0.88}$ & 28.1$_{2.60}$ \\
pTrue (Mistral-7B-Instruct)            & 47.2$_{1.16}$ & 26.3$_{2.28}$ & 49.8$_{0.35}$ & 24.3$_{3.02}$ \\
Lexical Similarity                     & 52.8$_{0.41}$ & 53.7$_{0.75}$ & 48.9$_{1.07}$ & 24.9$_{2.28}$ \\
Semantic Entropy                       & 50.2$_{0.33}$ & 62.8$_{0.56}$ & 51.4$_{0.97}$ & 61.3$_{2.59}$ \\ \hline
\textbf{Semantic Volume (ours)}        & \textbf{58.0$_{0.18}$} & \textbf{69.1$_{0.32}$} & \textbf{55.7$_{0.95}$} & \textbf{67.6$_{0.83}$} \\ \hline
\end{tabular}}
\caption{External uncertainty: Accuracy and F1 on CLAMBER and AmbigQA. Sampling‐based methods report mean $\pm$ std.\ over three trials (std.\ as subscript).}
\label{tab:Results-External}
\end{table*}

\section{Experiment: Internal Uncertainty}\label{sec:Experiment-Internal}

\subsection{Experimental Setup}

\noindent \textbf{Data.} We use subsets of the TriviaQA \citep{joshi2017triviaqa} and SQuAD \cite{rajpurkar2016squad} datasets, which are reading comprehension benchmarks containing questions paired with reference answers. For each task, we generate responses using the evaluation LLMs (see model details below) with a temperature of 0. A response $y$ is flagged as a hallucination if the ROUGE-L score with respect to the reference answer $y_{ref}$ falls below 0.3 (i.e. the label is defined as $\mathbf{1}_{RougeL(y, y_{ref}) < 0.3}$), following the same metric used in \citet{kossen2024semantic} \footnote{We also conducted a human validation on 1,000 randomly sampled examples and found a 95.1\% agreement with this labeling criterion.}. We retain 2500 data labeled as hallucinations and 2500 labeled as correct, constructing a balanced 5K dataset for each task. 

\noindent \textbf{Models.} 
We test on these models:  \texttt{Llama3.2-\allowbreak 1B-\allowbreak Instruct}, \texttt{Llama3-8B-Instruct}, \texttt{Qwen2.5-\allowbreak 1.5B-\allowbreak Instruct}, \texttt{Qwen3-\allowbreak 8B}, \texttt{Qwen3-\allowbreak 14B} and \texttt{Mistral-\allowbreak 7B-\allowbreak Instruct}.
For sampling-based methods, we generate candidate responses with temperature 1. For embeddings, we use the same sentence-transformer as in Section~\ref{sec:Experiment-External} (ablation study of different sentence-transformers can be found in Appendix~\ref{sec:Appendix-Variation-EmbeddingModels}).

\noindent \textbf{Evaluation.} 
We compare the predicted binary hallucination labels against the ground truth labels and report both accuracy and F1 score. Furthermore, we also add the evaluation based on the AUROC (area under the receiver operator characteristic curve) metric, which compares the raw uncertainty scores against the ground truth labels.  In fact, for a given uncertainty measure $m(\cdot)$, the AUROC score is equivalent to  $\P[m(y_{hallucinated}) > m(y_{correct})]$, where $y_{hallucinated}$ and $y_{correct}$ are randomly chosen hallucinated and correct answers, respectively. Hence a higher AUROC (closer to 1) indicates that the uncertainty measure more effectively distinguishes hallucinated responses by assigning them higher uncertainty scores. AUROC is a widely used metric in many existing studies on response hallucination detection \citep{kuhn2023semantic, kossen2024semantic, kadavath2022language}.

\noindent \textbf{Baselines.}
We adapt the baseline methods from Section~\ref{subsec:ExperimentSetup-External}, applying them to sampled responses instead of augmented queries.

\subsection{Results}
The performance results for \texttt{Llama3.2-\allowbreak 1B-\allowbreak Instruct}  under both TriviaQA and SQuAD tasks are presented in Tables~\ref{tab:Results-Internal} (results for other models can be found in Appendix~\ref{sec:Appendix-Variation-LargeResponseGenerationModel}). For sampling-based methods, we set the response sampling size to $n=20$.  Notably, our semantic volume method significantly outperforms all baselines in both accuracy and F1 score. Furthermore, the AUROC results confirm that our semantic volume serves as a highly effective uncertainty signal for hallucination detection. Additionally, we observe that when comparing p(True), which includes sampled candidate responses as context, to direct prompting, the inclusion of context degrades the performance. Moreover, for methods that rely on prompting LLMs (including pTrue), we find that LLMs sometimes exhibit a strong bias toward answering almost all ``Yes'' or all ``No''. In fact in Table~\ref{tab:Results-Internal}, both \textit{Prompt Llama3.2-1B-Instruct} and \textit{pTrue (Llama3.2-1B-Instruct)} exhibit this behavior, nearly predicting all responses as hallucinations (further discussions are provided in Appendix~\ref{sec:Appendix-Variation-LargeResponseGenerationModel}). This instability highlights another drawback of such methods that rely on LLM prompting for uncertainty estimation. From both tables, we observe that sampling-based methods that measure the dispersion of sampled responses (particularly lexical similarity and semantic entropy) generally outperform probability-based methods, which aligns with the findings in \citet{cole2023selectively}.

\begin{table*}[t!]
\centering
\renewcommand{\arraystretch}{1.28}   
\resizebox{0.78\linewidth}{!}{%
\begin{tabular}{l|ccc|ccc}
\hline
\multirow{2}{*}{\textbf{Method}} &
\multicolumn{3}{c|}{\textbf{TriviaQA}} &
\multicolumn{3}{c}{\textbf{SQuAD}} \\ \cline{2-7}
& \textbf{Acc.} & \textbf{F1} & \textbf{AUROC} & \textbf{Acc.} & \textbf{F1} & \textbf{AUROC} \\ \hline
Prompt Llama3.2-1B-Instruct          & 50.5 & 66.8 & N/A & 50.6 & 22.7 & N/A \\
Prompt Llama3-8B-Instruct            & 65.5 & 60.1 & N/A & \underline{64.0} & 65.5 & N/A \\
Prompt Mistral-7B-Instruct           & \underline{68.7} & 61.8 & N/A & 60.7 & 57.3 & N/A \\ \hline
Last Token Entropy                   & 60.1 & 59.9 & 63.9 & 53.6 & 34.2 & 56.6 \\
Log Probabilities                    & 60.1 & 62.9 & 65.5 & 54.8 & 46.4 & 56.7 \\ \hline
pTrue (Llama3.2-1B-Instruct)         & 49.5$_{0.37}$ & 64.4$_{0.81}$ & 61.2$_{0.91}$ & 50.1$_{0.81}$ & 9.5$_{3.26}$ & 52.3$_{1.11}$ \\
pTrue (Llama3-8B-Instruct)           & 63.7$_{2.04}$ & 45.4$_{4.23}$ & 56.9$_{3.25}$ & 58.2$_{0.96}$ & 45.4$_{1.63}$ & 53.5$_{1.30}$ \\
pTrue (Mistral-7B-Instruct)          & 62.8$_{0.86}$ & 46.3$_{1.12}$ & 65.4$_{1.26}$ & 51.5$_{1.22}$ & 11.3$_{3.65}$ & 51.7$_{0.89}$ \\
Lexical Similarity                   & 64.6$_{0.61}$ & \underline{72.2$_{0.36}$} & 73.3$_{0.39}$ & 58.3$_{0.54}$ & 59.9$_{0.66}$ & 62.4$_{0.71}$ \\
Semantic Entropy                     & 63.8$_{0.98}$ & 69.7$_{1.08}$ & \underline{73.9$_{0.75}$} & 61.8$_{1.51}$ & \underline{68.8$_{1.75}$} & \underline{64.9$_{1.17}$} \\ \hline
\textbf{Semantic Volume (ours)}      & \textbf{72.4$_{0.55}$} & \textbf{75.5$_{0.34}$} & \textbf{79.7$_{0.16}$} & \textbf{64.7$_{0.49}$} & \textbf{71.2$_{0.63}$} & \textbf{68.8$_{0.56}$} \\ \hline
\end{tabular}}
\caption{Internal uncertainty: Accuracy, F1, and AUROC on TriviaQA and SQuAD.  “N/A’’ indicates AUROC is not applicable (no probabilistic scores).  For sampling-based methods, we report mean $\pm$ std.\ over three trials (std.\ shown as subscript).}
\label{tab:Results-Internal}
\end{table*}

\subsection{Distribution Separation}
In this section, we compare the distributions of various uncertainty measures using visualization and the Kolmogorov–Smirnov (KS) test \citep{smirnov1948table}. Specifically, we plot histograms for the hallucinated subset (label 1) and the correct subset (label 0) from our TriviaQA dataset. Ideally, a well-performing measure should yield two distinct bulks, with greater separation indicating stronger discriminative power. To quantitatively assess the separation, we perform a two-sample Kolmogorov–Smirnov test, a non-parametric test that compares two empirical distributions by measuring the maximum distance between their empirical cumulative distribution functions. A large KS statistic combined with a small $p$-value suggests that the two distributions are significantly different.

The plots and statistics are shown in Figure~\ref{fig:distribution_separation}. Combined the KS statistics and the histograms, we observe that indeed \textit{Last Token Entropy} and \textit{Log Probabilities} struggle to effectively separate the two groups of data, while \textit{Lexical Similarity}, \textit{Semantic Entropy}, and our \textit{Semantic Volume} exhibit stronger separation. Particularly, we note that the distribution of semantic entropy closely resembles that of our semantic volume measure. In fact, we will provide theoretical analysis in Section~\ref{sec:Theory} showing that our semantic volume can be interpreted as the differential entropy of the semantic embedding vectors, and can be viewed as a more general and continuous version of semantic entropy.

\begin{figure}[ht]
    \centering
    \begin{subfigure}{0.30\linewidth}
        \centering
        \includegraphics[width=\linewidth]{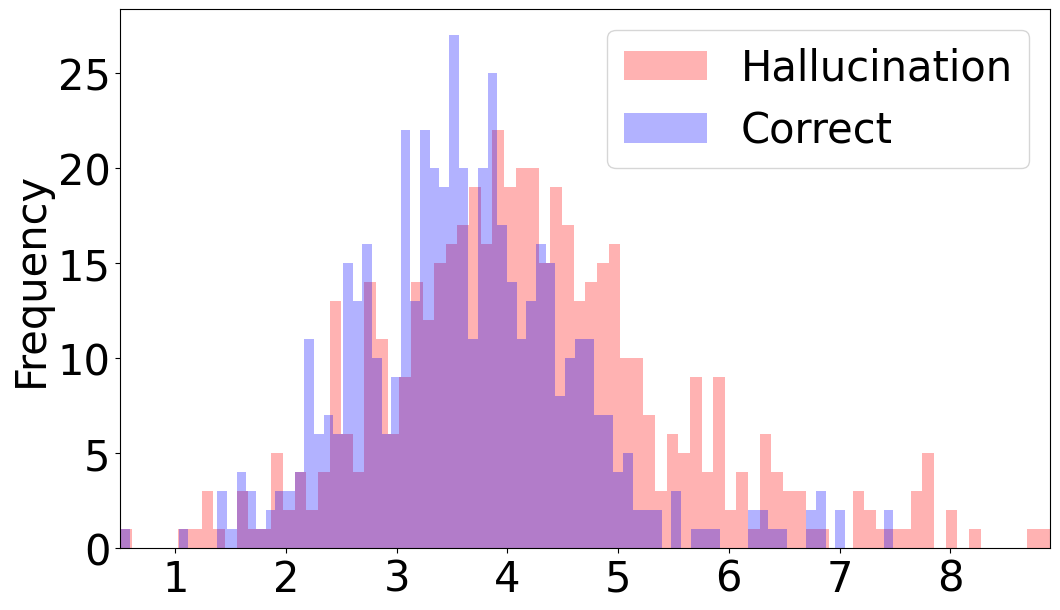}
        \caption{Last Token Entropy (KS Statistic 0.238 with $p$-value 1e-36).}
    \end{subfigure} \hfill
    \begin{subfigure}{0.30\linewidth}
        \centering
        \includegraphics[width=\linewidth]{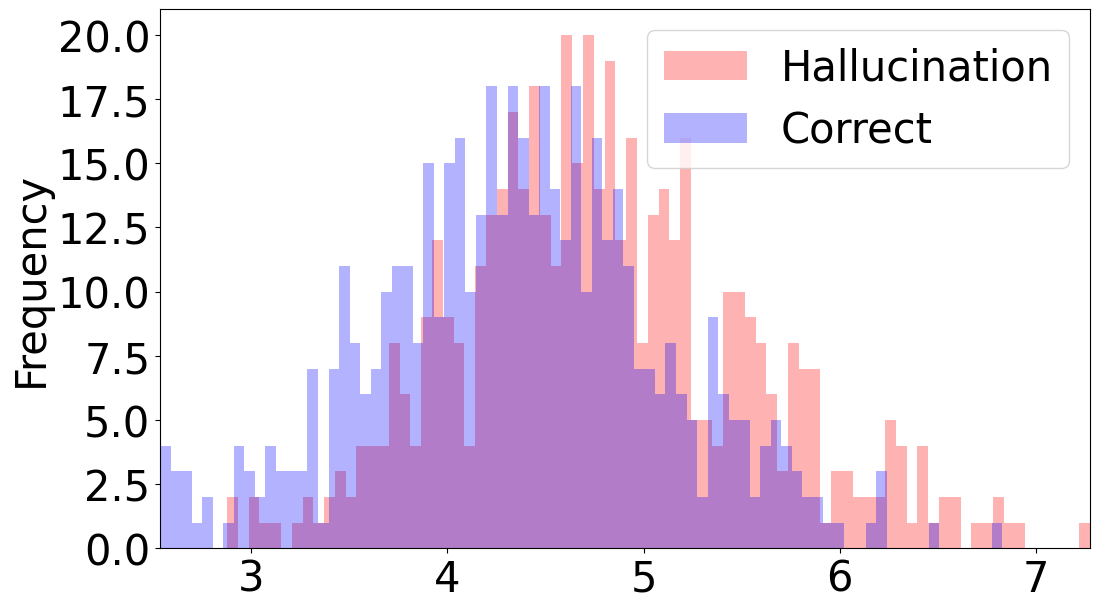}
        \caption{Log Probabilities (KS Statistic 0.226 with $p$-value 5e-27).}
    \end{subfigure} \hfill
    \begin{subfigure}{0.30\linewidth}
        \centering
        \includegraphics[width=\linewidth]{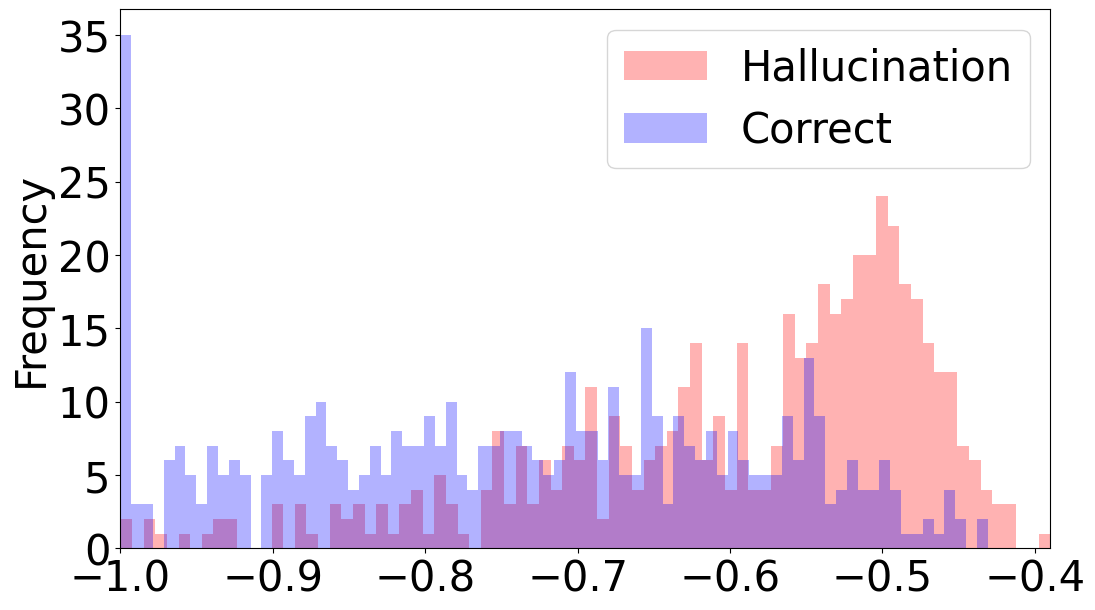}
        \caption{Lexical Similarity (KS Statistic 0.428 with $p$-value 1e-26).}
    \end{subfigure} \\[1ex]
    \begin{subfigure}{0.30\linewidth}
        \centering
        \includegraphics[width=\linewidth]{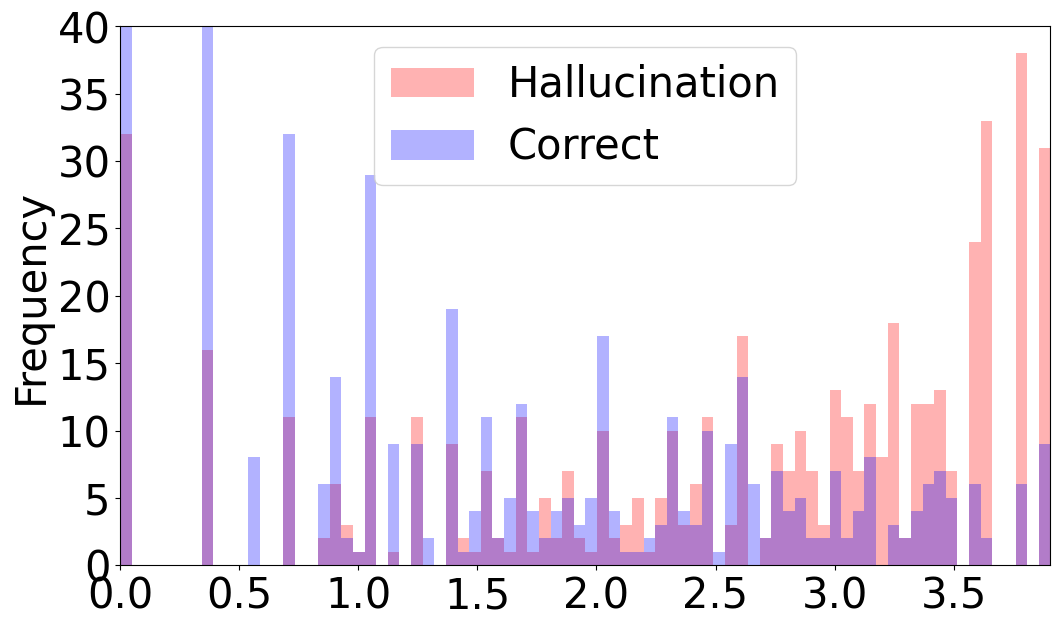}
        \caption{Semantic Entropy (KS Statistic: 0.376 with $p$-value 7e-32).}
    \end{subfigure} 
     \hspace{8mm}
    \begin{subfigure}{0.30\linewidth}
        \centering
        \includegraphics[width=\linewidth]{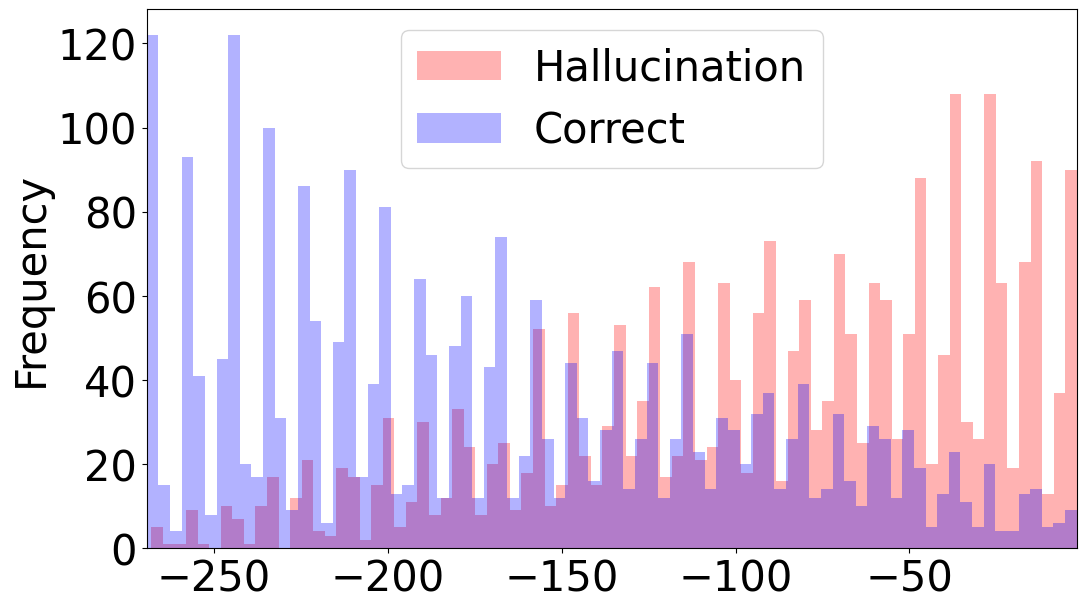}
        \caption{Semantic Volume (KS Statistic 0.446 with $p$-value 3e-54).}
    \end{subfigure}
    \caption{Distribution of subsets for both labels across different uncertainty measures. Two-sample Kolmogorov–Smirnov statistics are computed to quantify the separation of two bulks.}
    \label{fig:distribution_separation}
\end{figure}

\section{Ablation Study and Hyperparameter Analysis}
The original embedding dimension from the sentence-transformer is 1536. In the external uncertainty experiment, we reduce the dimensionality using PCA with $d = 10$, while in the internal uncertainty experiment, we set $d = 20$. Ablation studies confirm PCA-projected vectors $\tilde{\mV}$ outperform raw embeddings (see Appendix~\ref{sec:Appendix-Variation-d}). This suggests that lower-dimensional projections help separate perturbation vectors more effectively. Furthermore, we explore various values of $d$ to analyze the effect of dimensionality on performance. Our findings indicate that the optimal dimension $d$ is task-dependent, with diminishing improvements beyond a certain point. Nonetheless, it is important to note that even without PCA dimension reduction, our method still outperforms various baselines in both external and internal uncertainty detection tasks (see Figure~\ref{fig:Ablation}).

To demonstrate the generalizability of our method and study the effect of various hyperparameters and model choices. In Appendix~\ref{sec:Appendix-Variation-n}, we analyze the effect of varying $n$, the number of perturbations. As expected, larger $n$ yields more accurate uncertainty estimation but increases computational cost, while smaller $n$ reduces cost but may sacrifice accuracy. We choose $n=20$ to balance performance and efficiency. In Appendices~\ref{sec:Appendix-Variation-EmbeddingModels} and \ref{sec:Appendix-Variation-LargeResponseGenerationModel}, we examine the impact of different embedding models and response generation models. We find that larger embedding models provide little additional performance gain. Furthermore, when detecting hallucinations in responses generated by a larger LLM, the performance of most methods slightly declines. However, our Semantic Volume method still outperforms all baselines.

\section{Theoretical Analysis}\label{sec:Theory}
In this section, we present theoretical analyses, with proofs and additional supporting lemmas provided in Appendix~\ref{sec:Appendix-Proofs}. First, we derive the exact formula for the optimal threshold $\tau^*$ in Proposition~\ref{prop:OptimalTauFormula}.
Then in Theorems~\ref{thm:DiffEntropy} and \ref{thm:Equivalence}, we show that under Gaussian distribution assumptions of the perturbations, our semantic volume measure effectively computes the differential entropy of the embedding vectors, using entropy as a measure of uncertainty detection. This insight allows us to naturally view our method as a generalization of semantic entropy \citep{kuhn2023semantic}. Notably, semantic entropy involves a manual clustering step, and only considers the entropy between clusters while ignoring discrepancies within clusters (since the responses in the same cluster are semantically similar but not exactly identical). In contrast, \textbf{our method more generally captures the overall semantic dispersion across all sampled perturbations}, providing a more comprehensive uncertainty measure.


\begin{proposition}[Formula for optimal threshold $\tau^*$]\label{prop:OptimalTauFormula}
Denote $\calL \in \calD$  as a labeled subset with inputs $\{s_i\}$ and labels $\{y_i\}$:
\[
\mathcal{L} = \{(s_i, y_i)\}_{i=1}^M \quad \text{with} \quad y_i \in \{0, 1\},
\]
For each $s_i$, denotes its semantic volume as $m(s_i)$. Define a classification rule
\[
\hat{y}_i(\tau) =
\begin{cases}
1, & m(s_i) > \tau, \\
0, & \text{otherwise}.
\end{cases}
\]
Then the optimal threshold $\tau$ that maximizes the $F_1$ score on $\mathcal{L}$ is given by
\begin{equation}\label{eq:optimal_tau}
    \tau^* \bydef \arg \max_{\tau \in \mathbb{R}} F_1(\tau) = \arg \max_{\tau \in \mathbb{R}} \left( \frac{2 \sum_{i=1}^M \mathbf{1}_{\hat{y}_i(\tau) = y_i = 1} }{\sum_{i=1}^M \mathbf{1}_{\hat{y}_i(\tau) = 1} + \mathbf{1}_{y_i=1} }  \right).
\end{equation}
Note that the $F_1$ score can be replaced by other metrics, such as the accuracy. 


\end{proposition}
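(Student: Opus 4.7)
The plan is to observe that Proposition~\ref{prop:OptimalTauFormula} is essentially an algebraic restatement of the standard $F_1$ formula specialized to the threshold-based classifier, so the proof reduces to two pieces: verifying that the ratio inside the argmax in \eqref{eq:optimal_tau} really is the $F_1$ score of $\hat{y}_i(\tau)$, and showing that the argmax over $\mathbb{R}$ is attained.

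First I would recall the standard identity $F_1 = 2\,\mathrm{TP}/(2\,\mathrm{TP} + \mathrm{FP} + \mathrm{FN})$. For the threshold rule $\hat{y}_i(\tau) = \mathbf{1}_{m(s_i) > \tau}$ on $\calL$, I would identify each count as a sum of indicators: $\mathrm{TP}(\tau) = \sum_i \mathbf{1}_{\hat{y}_i(\tau) = y_i = 1}$, while $\mathrm{TP}(\tau) + \mathrm{FP}(\tau) = \sum_i \mathbf{1}_{\hat{y}_i(\tau) = 1}$ is the number of predicted positives and $\mathrm{TP}(\tau) + \mathrm{FN}(\tau) = \sum_i \mathbf{1}_{y_i = 1}$ is the number of true positives. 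Substituting into $2\,\mathrm{TP}/\bigl((\mathrm{TP}+\mathrm{FP}) + (\mathrm{TP}+\mathrm{FN})\bigr)$ yields exactly the expression inside the argmax of \eqref{eq:optimal_tau}, giving the definition of $\tau^*$ verbatim.

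The second step is to justify that the maximum over $\mathbb{R}$ is attained. Since $F_1(\tau)$ depends on $\tau$ only through the binary vector $(\hat{y}_1(\tau), \dots, \hat{y}_M(\tau))$, and this vector changes only when $\tau$ crosses one of the $M$ scores $m(s_1), \dots, m(s_M)$, the function $\tau \mapsto F_1(\tau)$ is piecewise constant with at most $M+1$ distinct values. Consequently the supremum over $\mathbb{R}$ is achieved on at least one of these finitely many pieces, and algorithmically $\tau^*$ can be computed by sorting the scores $\{m(s_i)\}$ and evaluating $F_1$ at one representative threshold from each of the $M+1$ intervals they induce.

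The only subtle point, which I expect to be the main (but minor) obstacle, is handling boundary cases where the denominator in \eqref{eq:optimal_tau} vanishes. If no $y_i = 1$ in $\calL$, then $F_1$ is conventionally undefined; if at some $\tau$ all predictions are negative, both the numerator and the predicted-positive term in the denominator vanish simultaneously, so one must adopt the convention $0/0 \bydef 0$. I would flag these conventions explicitly and note that outside the degenerate case where $\sum_i \mathbf{1}_{y_i=1} = 0$, the denominator $\sum_i \mathbf{1}_{\hat{y}_i(\tau)=1} + \sum_i \mathbf{1}_{y_i=1}$ is strictly positive whenever the numerator is, so the objective is well-defined on the pieces that matter and the argmax argument above goes through. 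The same proof applies verbatim if $F_1$ is replaced by any other metric that depends on $\tau$ only through $(\hat{y}_1(\tau), \dots, \hat{y}_M(\tau))$, justifying the final remark.
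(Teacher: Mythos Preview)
Your proposal is correct and takes essentially the same approach as the paper: both derive the $F_1$ expression in \eqref{eq:optimal_tau} from the standard identity $F_1 = 2\,\mathrm{TP}/(2\,\mathrm{TP}+\mathrm{FP}+\mathrm{FN})$ by rewriting $\mathrm{TP}$, $\mathrm{TP}+\mathrm{FP}$, and $\mathrm{TP}+\mathrm{FN}$ as sums of indicators. You actually go further than the paper by arguing that the argmax is attained (via the piecewise-constant observation) and by flagging the degenerate $0/0$ edge cases; the paper's proof stops at the algebraic simplification and does not address either point.
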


\begin{theorem}\label{thm:DiffEntropy}
Denote the embedding vector and normalized embedding vector of the original text (either a query or a response) as $\bvx$ and $\bvv$, respectively. Denote the perturbation embeddings as $\mX \bydef  [\vx_1 \vx_2 \dots \vx_n] \in \R^{d \times n}$ and the normalized perturbation embeddings as $\mV \bydef  [\vv_1 \vv_2 \dots \vv_n] \in \R^{d \times n}$ (i.e. $\vx_i = \vv_i /  \|\vv_i\|$ for each $i \in [n]$). Assume Gaussian distribution $\vx_i \sim \calN(\bvx, \mSigma)$. Then in high-dimensional regime, $\log \det(\mV^\top \mV)$ corresponds to the shifted differential entropy of the perturbations $\{\vx_i\}_{k\in[n]}$. That is,
\[
\log \det(\mV^\top \mV)\ \dot{=}\ \calH(\mX) + C,
\]
\end{theorem}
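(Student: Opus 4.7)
The plan is to match $\log\det(\mV^\top\mV)$ against the closed-form Gaussian differential entropy $\calH(\mX) = \tfrac{n}{2}\log\det(2\pi e\,\mSigma)$ for an iid sample of size $n$, absorbing every term that depends only on $d$, $n$, $\|\bvx\|$, and $\Tr(\mSigma)$ into the additive constant $C$. First I would eliminate the normalization: writing $\mV = \mX\mD^{-1}$ with $\mD = \mathrm{diag}(\|\vx_1\|,\dots,\|\vx_n\|)$, one obtains
\[
\log\det(\mV^\top\mV) \;=\; \log\det(\mX^\top\mX) \;-\; 2\sum_{i=1}^n \log\|\vx_i\|,
\]
and in the high-dimensional regime Gaussian norm concentration gives $\|\vx_i\|^2 = \|\bvx\|^2 + \Tr(\mSigma) + o(1)$ with high probability, so the $\sum_i\log\|\vx_i\|$ contribution reduces to a deterministic offset that is folded into $C$.

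Next I would reduce $\log\det(\mX^\top\mX)$ to $\log\det(\mSigma)$. Decomposing $\mX = \bvx\vone^\top + \mZ$ with iid columns $\vz_i\sim\calN(\vzero,\mSigma)$, the matrix determinant lemma expresses $\det(\mX^\top\mX)$ as $\det(\mZ^\top\mZ)$ times a rank-one factor involving $\bvx^\top\mZ$, which contributes only an $O(1)$ log-shift that is likewise moved into $C$. It then remains to connect $\log\det(\mZ^\top\mZ)$ to $\log\det(\mSigma)$. Using the non-zero-eigenvalue correspondence $\det(\mZ^\top\mZ) = \prod_{i=1}^n \lambda_i(\mZ\mZ^\top)$ and the fact that $\tfrac{1}{n}\mZ\mZ^\top$ is a Wishart-type sample-covariance estimator of $\mSigma$, its nonzero log-spectrum concentrates around that of $\mSigma$ in the high-dimensional regime (up to deterministic Marchenko--Pastur-type offsets depending only on $d$ and $n$), giving $\log\det(\mZ^\top\mZ) = \tfrac{n}{2}\log\det(\mSigma) + C'$ to leading order. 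Combining the three contributions yields $\log\det(\mV^\top\mV) \;\dot{=}\; \tfrac{n}{2}\log\det(\mSigma) + \text{constants} \;=\; \calH(\mX) + C$, which is the claim.

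The main obstacle is the Wishart step: when $d > n$, the $d\times d$ matrix $\mZ\mZ^\top$ is rank-deficient, so one must track $\log\det$ through the random $n$-dimensional subspace spanned by $\{\vz_i\}$ and verify that the $d$- and $n$-dependent offsets separate cleanly into an input-independent constant. The rank-one mean-shift correction from $\bvx\vone^\top$ and the norm concentration both rely on $d$ being sufficiently large relative to some function of $n$, and precisely quantifying this regime is delicate; this is why the theorem asserts $\dot{=}$ rather than strict equality, since the residual corrections are $o(1)$ asymptotically but genuinely nonzero at finite $d, n$.
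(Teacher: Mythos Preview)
Your proposal follows essentially the same three-part architecture as the paper's proof: (i) strip off the normalization via high-dimensional Gaussian norm concentration, (ii) use the matrix determinant lemma to account for the mean $\bvx$, and (iii) invoke sample-covariance/Wishart concentration to link the remaining Gram determinant to $\log\det(\mSigma)$, which is the only non-constant piece of the Gaussian entropy formula.

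The one substantive difference is where the mean correction is applied. You center first, writing $\mX = \bvx\vone^\top + \mZ$ and then relating $\det(\mX^\top\mX)$ to $\det(\mZ^\top\mZ)$ on the $n\times n$ side, after which Wishart concentration handles $\mZ^\top\mZ$. The paper instead works on the $d\times d$ side: it shows $\tfrac{1}{n}\mX\mX^\top \approx \mSigma + \bvx\bvx^\top$ directly and applies the matrix determinant lemma there to peel off $\bvx\bvx^\top$. Your ordering is arguably cleaner because it isolates a genuinely mean-zero Wishart object, and you correctly flag the rank-deficiency obstacle when $d>n$; the paper's route needs the approximation $\tfrac{1}{n}\mX\mX^\top\to\mSigma$, which strictly requires $n\to\infty$ and sits in some tension with the ``high-dimensional'' framing. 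Two minor slips to fix: the perturbation $\mX^\top\mX - \mZ^\top\mZ = \vone(\mZ^\top\bvx)^\top + (\mZ^\top\bvx)\vone^\top + \|\bvx\|^2\vone\vone^\top$ is rank two, not one, so you will need two applications of the determinant lemma (or the rank-$k$ version); and the coefficient $\tfrac{n}{2}$ you write in front of $\log\det(\mSigma)$ does not fall out of the Wishart step as stated---for isotropic $\mSigma=\sigma^2\mI_d$ one gets $\log\det(\mZ^\top\mZ)=n\log\sigma^2+\log\det(\mW^\top\mW)=\tfrac{n}{d}\log\det(\mSigma)+\text{const}$, so the affine relation to $\calH(\mX)$ holds but with a different slope than you claim. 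Both the paper and you ultimately rely on Lemma~\ref{lem:LinearTransformationInvariance} (invariance under affine rescaling of the score) to absorb such factors, so this does not break the argument.
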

where $\calH(\mX) \bydef -\E_{x\sim \mX} \left[ \log p_{\mX}(x) \right] $ is the differential entropy and $C$ is a constant offset term.

Then we obtain the following Theorem, which is a direct consequence of Theorem~\ref{thm:DiffEntropy} and Lemma~\ref{lem:LinearTransformationInvariance} in Appendix~\ref{sec:Appendix-Proofs}.

\begin{theorem} \label{thm:Equivalence}
Under the same setting and notations of the Theorem~\ref{thm:DiffEntropy}, our \textbf{Semantic Volume} method essentially generates same binary decisions compared to using \textbf{differential entropy} of the perturbation embedding vectors. That is, denote the Semantic Volume measure and differential entropy measure as $m(\cdot)$ and $\tilde{m}(\cdot)$ respectively. For the labels
\[
y_i \bydef \mathbf{1}_{m(s_i) < \tau^\ast} \quad \text{ and }\quad  \tilde{y}_i \bydef  \mathbf{1}_{\tilde{m}(s_i) < \tilde{\tau}^\ast},
\]
where $\tilde{\tau}^\ast$ is the optimal threshold for the differential entropy measure, we have
\[
\tilde{y}_i = y_i \quad \text{for all } s_i \in \calD \setminus \calL.
\]

\end{theorem}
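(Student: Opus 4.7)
The plan is to reduce the claim to the elementary fact that thresholding is translation-invariant. By Theorem~\ref{thm:DiffEntropy}, under the Gaussian perturbation assumption, the unprojected log-volume satisfies $\log \det(\mV^\top \mV) \dot{=} \calH(\mX) + C_0$ for a universal constant $C_0$ that does not depend on $s_i$. The Semantic Volume in \eqref{eq:semantic-vol} differs from this unprojected quantity only by the PCA projection step. Since $\mP_{PCA}$ is a fixed linear map applied identically to every $s_i$, invoking Lemma~\ref{lem:LinearTransformationInvariance} from the appendix shows that replacing $\mV$ with $\tilde{\mV} = \mP_{PCA}\mV$ shifts $\log \det(\mV^\top \mV)$ by an $s_i$-independent additive constant. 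Folding this into $C_0$, I obtain
\begin{equation*}
    m(s_i) = \tilde m(s_i) + C \qquad \text{for all } s_i,
\end{equation*}
where $C$ is a single global constant.

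Next I would carry through the thresholding argument. For any $\tau \in \mathbb{R}$ and any $s_i$,
\begin{equation*}
    m(s_i) < \tau \;\Longleftrightarrow\; \tilde m(s_i) < \tau - C.
\end{equation*}
Applied on the labeled subset $\calL$, this shows that the $F_1$ score achieved by the differential-entropy classifier at threshold $\tau - C$ equals the $F_1$ score of the Semantic Volume classifier at threshold $\tau$. Therefore the map $\tau \mapsto \tau - C$ is a bijection between the two $F_1$-landscapes, and by Proposition~\ref{prop:OptimalTauFormula} the two argmaxes are related by $\tilde{\tau}^\ast = \tau^\ast - C$. Applied on $\calD \setminus \calL$, the same equivalence with $\tau = \tau^\ast$ gives $\mathbf{1}_{m(s_i) < \tau^\ast} = \mathbf{1}_{\tilde m(s_i) < \tilde{\tau}^\ast}$, i.e.\ $y_i = \tilde y_i$, as claimed.

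The main obstacle is the meaning of the $\dot{=}$ in Theorem~\ref{thm:DiffEntropy}: strictly speaking there is a lower-order error term, and for a finite dataset a boundary point could in principle be reclassified. I would handle this by stating the equivalence as holding in the high-dimensional regime where Theorem~\ref{thm:DiffEntropy} is invoked, so that the error is uniformly $o(1)$ across all $s_i \in \calD$ while the between-sample variation of $\calH(\mX_i)$ stays $\Theta(1)$. Beyond this, the argument is essentially a bookkeeping exercise: the PCA step is neutralized by Lemma~\ref{lem:LinearTransformationInvariance}, the Gaussian-to-entropy conversion is supplied by Theorem~\ref{thm:DiffEntropy}, and the threshold alignment is immediate from Proposition~\ref{prop:OptimalTauFormula}.
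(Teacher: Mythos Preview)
Your core strategy---invoke Theorem~\ref{thm:DiffEntropy} to obtain an additive shift between $\log\det(\mV^\top\mV)$ and the differential entropy, then use Lemma~\ref{lem:LinearTransformationInvariance} to argue that a constant offset is absorbed by the optimal-threshold search---is exactly the paper's approach; the paper simply states that Theorem~\ref{thm:Equivalence} is a direct consequence of those two results. Your explicit threshold-alignment computation and your remark on the $\dot{=}$ caveat are welcome elaborations that the paper omits.

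There is, however, a genuine error in your PCA step. Lemma~\ref{lem:LinearTransformationInvariance} asserts invariance under an affine transformation of the \emph{scalar} measure, i.e.\ $m\mapsto\alpha m+\beta$; it says nothing about applying a linear map to the embedding vectors themselves. The assertion that replacing $\mV$ by $\mP_{PCA}\mV$ shifts $\log\det(\mV^\top\mV)$ by an $s_i$-independent constant is not implied by that lemma and is false in general: $\log\det(\mV^\top\mP_{PCA}^\top\mP_{PCA}\mV)-\log\det(\mV^\top\mV)$ depends on how the columns of $\mV$ sit relative to the range of $\mP_{PCA}$, which varies across samples. The paper avoids this issue by stating Theorem~\ref{thm:Equivalence} ``under the same setting and notations of Theorem~\ref{thm:DiffEntropy}'', where $\mV\in\R^{d\times n}$ is already the working object and no separate projection step is analysed. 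So the fix is simply to drop the PCA paragraph and work, as the paper does, directly with the $\mV$ of Theorem~\ref{thm:DiffEntropy}; the rest of your argument then goes through unchanged.
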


\section{Conclusion}\label{sec:conclusion}
One limitation of our current study is that our work considers the same scope as the references in Section~\ref{subsec:RelatedWork-Internal}: we focus on the situation where uncertainty aligns with incorrectness, without addressing \textit{confidently wrong} responses, meaning an LLM hallucinates with high confidence (low uncertainty). We believe that addressing such cases requires different strategies, such as factuality checking or incorporating external knowledge for verification. We leave these explorations for future work.

In summary, we have introduced \textit{Semantic Volume}, a novel and general-purpose measure for detecting both \textit{external uncertainty} (query ambiguity) and \textit{internal uncertainty} (response uncertainty) in large language models. By generating perturbations, embedding these perturbations as normalized vectors, and computing the determinant of their Gram matrix, we obtain a measure that captures the overall semantic dispersion. Extensive experiments on benchmark datasets showed that semantic volume significantly outperforms various types of existing baselines (prompting-based, probability-based, and sampling-based) for both ambiguous query classification and response hallucination detection. Furthermore, from a theoretical standpoint, we established that semantic volume can be viewed as the differential entropy of the embedding vectors, thereby unifying and extending prior sampling-based metrics (e.g., semantic entropy). This interpretation highlights why our measure is robust and comprehensive: unlike purely clustering-based approaches, we account for the \textit{overall} dispersions in the embedding space. Moreover, our method is applicable even when the LLM is only accessible via an external API or when internal model details are unavailable, making it broadly practical across closed-source or API-based models. Overall, our findings suggest that semantic volume is a promising step toward more reliable, interpretable uncertainty detection for both external and internal uncertainty of LLMs.

\bibliography{aaai2026}

@article{llama3modelcard,

title={Llama 3 Model Card},

author={AI@Meta},

year={2024},

url = {https://github.com/meta-llama/llama3/blob/main/MODEL_CARD.md}

}

@article{anil2023gemini,
  title={Gemini: A family of highly capable multimodal models},
  author={Anil, Rohan and Borgeaud, Sebastian and Wu, Yonghui and Alayrac, Jean-Baptiste and Yu, Jiahui and Soricut, Radu and Schalkwyk, Johan and Dai, Andrew M and Hauth, Anja and Millican, Katie and others},
  journal={arXiv preprint arXiv:2312.11805},
  year={2023}
}

@article{brown2020language,
  title={Language models are few-shot learners},
  author={Brown, Tom B},
  journal={arXiv preprint arXiv:2005.14165},
  year={2020}
}

@article{touvron2023llama,
  title={Llama: Open and efficient foundation language models},
  author={Touvron, Hugo and Lavril, Thibaut and Izacard, Gautier and Martinet, Xavier and Lachaux, Marie-Anne and Lacroix, Timoth{\'e}e and Rozi{\`e}re, Baptiste and Goyal, Naman and Hambro, Eric and Azhar, Faisal and others},
  journal={arXiv preprint arXiv:2302.13971},
  year={2023}
}

@article{kulesza2012determinantal,
  title={Determinantal point processes for machine learning},
  author={Kulesza, Alex and Taskar, Ben and others},
  journal={Foundations and Trends{\textregistered} in Machine Learning},
  volume={5},
  number={2--3},
  pages={123--286},
  year={2012},
  publisher={Now Publishers, Inc.}
}

@article{achiam2023gpt,
  title={Gpt-4 technical report},
  author={Achiam, Josh and Adler, Steven and Agarwal, Sandhini and Ahmad, Lama and Akkaya, Ilge and Aleman, Florencia Leoni and Almeida, Diogo and Altenschmidt, Janko and Altman, Sam and Anadkat, Shyamal and others},
  journal={arXiv preprint arXiv:2303.08774},
  year={2023}
}

@article{kuhn2023semantic,
  title={Semantic uncertainty: Linguistic invariances for uncertainty estimation in natural language generation},
  author={Kuhn, Lorenz and Gal, Yarin and Farquhar, Sebastian},
  journal={arXiv preprint arXiv:2302.09664},
  year={2023}
}

@article{farquhar2024detecting,
  title={Detecting hallucinations in large language models using semantic entropy},
  author={Farquhar, Sebastian and Kossen, Jannik and Kuhn, Lorenz and Gal, Yarin},
  journal={Nature},
  volume={630},
  number={8017},
  pages={625--630},
  year={2024},
  publisher={Nature Publishing Group UK London}
}

@article{kossen2024semantic,
  title={Semantic entropy probes: Robust and cheap hallucination detection in llms},
  author={Kossen, Jannik and Han, Jiatong and Razzak, Muhammed and Schut, Lisa and Malik, Shreshth and Gal, Yarin},
  journal={arXiv preprint arXiv:2406.15927},
  year={2024}
}

@article{zhang2024clamber,
  title={CLAMBER: A Benchmark of Identifying and Clarifying Ambiguous Information Needs in Large Language Models},
  author={Zhang, Tong and Qin, Peixin and Deng, Yang and Huang, Chen and Lei, Wenqiang and Liu, Junhong and Jin, Dingnan and Liang, Hongru and Chua, Tat-Seng},
  journal={arXiv preprint arXiv:2405.12063},
  year={2024}
}

@article{kuhn2022clam,
  title={Clam: Selective clarification for ambiguous questions with generative language models},
  author={Kuhn, Lorenz and Gal, Yarin and Farquhar, Sebastian},
  journal={arXiv preprint arXiv:2212.07769},
  year={2022}
}

@article{kim2024aligning,
  title={Aligning Language Models to Explicitly Handle Ambiguity},
  author={Kim, Hyuhng Joon and Kim, Youna and Park, Cheonbok and Kim, Junyeob and Park, Choonghyun and Yoo, Kang Min and Lee, Sang-goo and Kim, Taeuk},
  journal={arXiv preprint arXiv:2404.11972},
  year={2024}
}

@article{zhang2023clarify,
  title={Clarify when necessary: Resolving ambiguity through interaction with lms},
  author={Zhang, Michael JQ and Choi, Eunsol},
  journal={arXiv preprint arXiv:2311.09469},
  year={2023}
}

@article{min2020ambigqa,
  title={AmbigQA: Answering ambiguous open-domain questions},
  author={Min, Sewon and Michael, Julian and Hajishirzi, Hannaneh and Zettlemoyer, Luke},
  journal={arXiv preprint arXiv:2004.10645},
  year={2020}
}

@article{yin2023large,
  title={Do Large Language Models Know What They Don't Know?},
  author={Yin, Zhangyue and Sun, Qiushi and Guo, Qipeng and Wu, Jiawen and Qiu, Xipeng and Huang, Xuanjing},
  journal={arXiv preprint arXiv:2305.18153},
  year={2023}
}

@article{cole2023selectively,
  title={Selectively answering ambiguous questions},
  author={Cole, Jeremy R and Zhang, Michael JQ and Gillick, Daniel and Eisenschlos, Julian Martin and Dhingra, Bhuwan and Eisenstein, Jacob},
  journal={arXiv preprint arXiv:2305.14613},
  year={2023}
}

@article{liu2024uncertainty,
  title={Uncertainty Estimation and Quantification for LLMs: A Simple Supervised Approach},
  author={Liu, Linyu and Pan, Yu and Li, Xiaocheng and Chen, Guanting},
  journal={arXiv preprint arXiv:2404.15993},
  year={2024}
}

@article{ji2024llm,
  title={Llm internal states reveal hallucination risk faced with a query},
  author={Ji, Ziwei and Chen, Delong and Ishii, Etsuko and Cahyawijaya, Samuel and Bang, Yejin and Wilie, Bryan and Fung, Pascale},
  journal={arXiv preprint arXiv:2407.03282},
  year={2024}
}

@article{lin2023generating,
  title={Generating with confidence: Uncertainty quantification for black-box large language models},
  author={Lin, Zhen and Trivedi, Shubhendu and Sun, Jimeng},
  journal={arXiv preprint arXiv:2305.19187},
  year={2023}
}

@article{grewal2024improving,
  title={Improving Uncertainty Quantification in Large Language Models via Semantic Embeddings},
  author={Grewal, Yashvir S and Bonilla, Edwin V and Bui, Thang D},
  journal={arXiv preprint arXiv:2410.22685},
  year={2024}
}

@article{quevedo2024detecting,
  title={Detecting Hallucinations in Large Language Model Generation: A Token Probability Approach},
  author={Quevedo, Ernesto and Yero, Jorge and Koerner, Rachel and Rivas, Pablo and Cerny, Tomas},
  journal={arXiv preprint arXiv:2405.19648},
  year={2024}
}

@article{reimers2019sentence,
  title={Sentence-BERT: Sentence Embeddings using Siamese BERT-Networks},
  author={Reimers, N},
  journal={arXiv preprint arXiv:1908.10084},
  year={2019}
}

@online{anthropic_claude3_2023,
  title = {Introducing the {Claude} 3 family of models},
  author = {{Anthropic}},
  year = {2023},
  url = {https://www.anthropic.com/news/claude-3-family},
  note = {Accessed: 2025-01-02}
}

@article{kadavath2022language,
  title={Language models (mostly) know what they know},
  author={Kadavath, Saurav and Conerly, Tom and Askell, Amanda and Henighan, Tom and Drain, Dawn and Perez, Ethan and Schiefer, Nicholas and Hatfield-Dodds, Zac and DasSarma, Nova and Tran-Johnson, Eli and others},
  journal={arXiv preprint arXiv:2207.05221},
  year={2022}
}

@article{malinin2020uncertainty,
  title={Uncertainty estimation in autoregressive structured prediction},
  author={Malinin, Andrey and Gales, Mark},
  journal={arXiv preprint arXiv:2002.07650},
  year={2020}
}

@article{fomicheva2020unsupervised,
  title={Unsupervised quality estimation for neural machine translation},
  author={Fomicheva, Marina and Sun, Shuo and Yankovskaya, Lisa and Blain, Fr{\'e}d{\'e}ric and Guzm{\'a}n, Francisco and Fishel, Mark and Aletras, Nikolaos and Chaudhary, Vishrav and Specia, Lucia},
  journal={Transactions of the Association for Computational Linguistics},
  volume={8},
  pages={539--555},
  year={2020},
  publisher={MIT Press One Rogers Street, Cambridge, MA 02142-1209, USA journals-info~…}
}

@misc{harville1998matrix,
  title={Matrix algebra from a statistician's perspective},
  author={Harville, David A},
  year={1998},
  publisher={Taylor \& Francis}
}

@article{joshi2017triviaqa,
  title={Triviaqa: A large scale distantly supervised challenge dataset for reading comprehension},
  author={Joshi, Mandar and Choi, Eunsol and Weld, Daniel S and Zettlemoyer, Luke},
  journal={arXiv preprint arXiv:1705.03551},
  year={2017}
}

@misc{MetaAI2024Llama3.2,
  author       = {Meta AI},
  title        = {LLaMA 3.2: Advancing AI for Vision, Edge, and Mobile Devices},
  year         = {2024},
  howpublished = {Meta AI Blog},
  url          = {https://ai.meta.com/blog/llama-3-2-connect-2024-vision-edge-mobile-devices/},
  note         = {Accessed: Feb 11, 2025}
}

@misc{vicuna2023blog,
  author       = {LMSys},
  title        = {Vicuna: An Open-Source Chatbot Impressing GPT-4 with 90\% ChatGPT Quality},
  year         = {2023},
  howpublished = {LMSys Blog},
  url          = {https://lmsys.org/blog/2023-03-30-vicuna/},
  note         = {Accessed: Feb 11, 2025}
}

@article{touvron2023llama2,
  title   = {LLaMA 2: Open Foundation and Fine-Tuned Chat Models},
  author  = {Touvron, Hugo and Martin, Louis and Stone, Kevin and Albert, Peter and Almahairi, Amjad and Babaei, Yasmine and Bashlykov, Nikolay and Batra, Siddhartha and Bhargava, Prajjwal and Bhosale, Shruti and others},
  year    = {2023},
  journal = {arXiv preprint arXiv:2307.09288},
  url     = {https://arxiv.org/abs/2307.09288}
}

@article{arora2021types,
  title={Types of out-of-distribution texts and how to detect them},
  author={Arora, Udit and Huang, William and He, He},
  journal={arXiv preprint arXiv:2109.06827},
  year={2021}
}

@article{smirnov1948table,
  title={Table for estimating the goodness of fit of empirical distributions},
  author={Smirnov, Nickolay},
  journal={The annals of mathematical statistics},
  volume={19},
  number={2},
  pages={279--281},
  year={1948},
  publisher={Institute of Mathematical Statistics}
}

@article{li2024rule,
  title={Rule-based data selection for large language models},
  author={Li, Xiaomin and Gao, Mingye and Zhang, Zhiwei and Yue, Chang and Hu, Hong},
  journal={arXiv preprint arXiv:2410.04715},
  year={2024}
}

@article{guo2025deepseek,
  title={Deepseek-r1: Incentivizing reasoning capability in llms via reinforcement learning},
  author={Guo, Daya and Yang, Dejian and Zhang, Haowei and Song, Junxiao and Zhang, Ruoyu and Xu, Runxin and Zhu, Qihao and Ma, Shirong and Wang, Peiyi and Bi, Xiao and others},
  journal={arXiv preprint arXiv:2501.12948},
  year={2025}
}

@article{huang2023survey,
  title={A survey on hallucination in large language models: Principles, taxonomy, challenges, and open questions},
  author={Huang, Lei and Yu, Weijiang and Ma, Weitao and Zhong, Weihong and Feng, Zhangyin and Wang, Haotian and Chen, Qianglong and Peng, Weihua and Feng, Xiaocheng and Qin, Bing and others},
  journal={arXiv preprint arXiv:2311.05232},
  year={2023}
}

@article{bang2023multitask,
  title={A multitask, multilingual, multimodal evaluation of chatgpt on reasoning, hallucination, and interactivity},
  author={Bang, Yejin and Cahyawijaya, Samuel and Lee, Nayeon and Dai, Wenliang and Su, Dan and Wilie, Bryan and Lovenia, Holy and Ji, Ziwei and Yu, Tiezheng and Chung, Willy and others},
  journal={arXiv preprint arXiv:2302.04023},
  year={2023}
}

@article{guerreiro2023hallucinations,
  title={Hallucinations in large multilingual translation models},
  author={Guerreiro, Nuno M and Alves, Duarte M and Waldendorf, Jonas and Haddow, Barry and Birch, Alexandra and Colombo, Pierre and Martins, Andr{\'e} FT},
  journal={Transactions of the Association for Computational Linguistics},
  volume={11},
  pages={1500--1517},
  year={2023},
  publisher={MIT Press One Broadway, 12th Floor, Cambridge, Massachusetts 02142, USA~…}
}

@article{ji2023survey,
  title={Survey of hallucination in natural language generation},
  author={Ji, Ziwei and Lee, Nayeon and Frieske, Rita and Yu, Tiezheng and Su, Dan and Xu, Yan and Ishii, Etsuko and Bang, Ye Jin and Madotto, Andrea and Fung, Pascale},
  journal={ACM Computing Surveys},
  volume={55},
  number={12},
  pages={1--38},
  year={2023},
  publisher={ACM New York, NY}
}

@article{chen2022towards,
  title={Towards improving faithfulness in abstractive summarization},
  author={Chen, Xiuying and Li, Mingzhe and Gao, Xin and Zhang, Xiangliang},
  journal={Advances in Neural Information Processing Systems},
  volume={35},
  pages={24516--24528},
  year={2022}
}

@article{chi2024clarinet,
  title={CLARINET: Augmenting Language Models to Ask Clarification Questions for Retrieval},
  author={Chi, Yizhou and Lin, Jessy and Lin, Kevin and Klein, Dan},
  journal={arXiv preprint arXiv:2405.15784},
  year={2024}
}

@article{lee2023asking,
  title={Asking clarification questions to handle ambiguity in open-domain qa},
  author={Lee, Dongryeol and Kim, Segwang and Lee, Minwoo and Lee, Hwanhee and Park, Joonsuk and Lee, Sang-Woo and Jung, Kyomin},
  journal={arXiv preprint arXiv:2305.13808},
  year={2023}
}

@article{openai2024learning,
  title={Learning to Reason with LLMs},
  author={{OpenAI}},
  year={2024},
  month={September},
  note={\url{https://openai.com/index/learning-to-reason-with-llms/}}
}

@article{wei2022chain,
  title={Chain-of-thought prompting elicits reasoning in large language models},
  author={Wei, Jason and Wang, Xuezhi and Schuurmans, Dale and Bosma, Maarten and Xia, Fei and Chi, Ed and Le, Quoc V and Zhou, Denny and others},
  journal={Advances in neural information processing systems},
  volume={35},
  pages={24824--24837},
  year={2022}
}

@article{lewis2020retrieval,
  title={Retrieval-augmented generation for knowledge-intensive nlp tasks},
  author={Lewis, Patrick and Perez, Ethan and Piktus, Aleksandra and Petroni, Fabio and Karpukhin, Vladimir and Goyal, Naman and K{\"u}ttler, Heinrich and Lewis, Mike and Yih, Wen-tau and Rockt{\"a}schel, Tim and others},
  journal={Advances in Neural Information Processing Systems},
  volume={33},
  pages={9459--9474},
  year={2020}
}

@misc{yang2024qwen2technicalreport,
      title={Qwen2 Technical Report}, 
      author={An Yang and Baosong Yang and Binyuan Hui and Bo Zheng and Bowen Yu and Chang Zhou and Chengpeng Li and Chengyuan Li and Dayiheng Liu and Fei Huang and Guanting Dong and Haoran Wei and Huan Lin and Jialong Tang and Jialin Wang and Jian Yang and Jianhong Tu and Jianwei Zhang and Jianxin Ma and Jianxin Yang and Jin Xu and Jingren Zhou and Jinze Bai and Jinzheng He and Junyang Lin and Kai Dang and Keming Lu and Keqin Chen and Kexin Yang and Mei Li and Mingfeng Xue and Na Ni and Pei Zhang and Peng Wang and Ru Peng and Rui Men and Ruize Gao and Runji Lin and Shijie Wang and Shuai Bai and Sinan Tan and Tianhang Zhu and Tianhao Li and Tianyu Liu and Wenbin Ge and Xiaodong Deng and Xiaohuan Zhou and Xingzhang Ren and Xinyu Zhang and Xipin Wei and Xuancheng Ren and Xuejing Liu and Yang Fan and Yang Yao and Yichang Zhang and Yu Wan and Yunfei Chu and Yuqiong Liu and Zeyu Cui and Zhenru Zhang and Zhifang Guo and Zhihao Fan},
      year={2024},
      eprint={2407.10671},
      archivePrefix={arXiv},
      primaryClass={cs.CL},
      url={https://arxiv.org/abs/2407.10671}, 
}

@article{bastounis2024consistent,
  title={On the consistent reasoning paradox of intelligence and optimal trust in AI: The power of'I don't know'},
  author={Bastounis, Alexander and Campodonico, Paolo and van der Schaar, Mihaela and Adcock, Ben and Hansen, Anders C},
  journal={arXiv preprint arXiv:2408.02357},
  year={2024}
}

@article{rajpurkar2016squad,
  title={SQuAD: 100,000+ Questions for Machine Comprehension of Text},
  author={Rajpurkar, Pranav and Zhang, Jian and Lopyrev, Konstantin and Liang, Percy},
  journal={arXiv preprint arXiv:1606.05250},
  year={2016},
  url={https://arxiv.org/abs/1606.05250}
}



\setcounter{secnumdepth}{2}
\clearpage
\appendix






\section{Semantic-Volume Uncertainty Detection Algorithm}\label{sec:Appendix-Algo}

Here we provide the pseudocode of the Semantic-Volume uncertainty detection algorithm.
\begin{algorithm}[H]
\caption{Uncertainty Detection via Semantic-Volume}
\begin{algorithmic}[1]
\Require \\
        \begin{itemize}
            \item Dataset $\calD = \{s_i\}_{i=1}^N$ of queries or query-response pairs.
            \item A small labeled subset $\mathcal{L} \subseteq \mathcal{D}$ with binary labels.
            \item SentenceTransformer model $\calM$.
        \end{itemize} 
    \For{each string $s \in \mathcal{D}$}
        \State \textbf{Extend} $s$ via query extension/response sampling to obtain: $[s_1, s_2, \dots, s_N]$
        \State Normalize embedding vectors:
        $\mV = [\vv_1  \vv_2  \dots  \vv_N] \bydef \left[\frac{\calM(s_1)}{\|\calM{(s_1)}\|}, \frac{\calM(s_2)}{\|\calM{(s_2)}\|}, \dots, \frac{\calM(s_N)}{\|\calM{(s_N)}\|}\right ]$
        \State Apply PCA dimension reduction: $\tilde{\mV} \bydef \mP_{PCA} \mV $.
        \State Compute Semantic Volume:
        \[
        m(s) \gets \log \mathbf{Vol}(\mP_{PCA} \mV) \bydef \log \det\left(\tilde{\mV}^\top \tilde{\mV} + \epsilon\, \mI_n\right)
        \]
    \EndFor
    \State \textbf{Threshold Tuning:} Using the labeled set $\mathcal{L}$, find the threshold $\tau^*$ that maximizes the F1 score:
    \[
    \tau^* \gets \arg \max_{\tau \in \mathbb{R}} F_1(\tau) 
    \bydef \arg \max_{\tau \in \mathbb{R}} \left( \frac{2 \sum_{i=1}^{|\calL|} \mathbf{1}_{\hat{y}_i(\tau) = y_i = 1} }{\sum_{i=1}^{|\calL|} \mathbf{1}_{\hat{y}_i(\tau) = 1} + \mathbf{1}_{y_i=1} }  \right).
    \]
    \For{each $(s, m(s))$} 
        \State \textbf{Predict} $s$ with:
        \[
        \hat{y}_s\gets
        \begin{cases}
            1, & \text{if } m(s) > \tau^*, \\
            0, & \text{otherwise.}
        \end{cases}
        \]
    \EndFor
    \State \Return The fully labeled dataset: $\{(s, \hat{y}_{s}) \,:\, s \in \mathcal{D}\}$.
\end{algorithmic}
\label{algo}
\end{algorithm}

\textbf{Time Complexity}

Most of the computational cost in our algorithm comes from perturbation sampling, similar to other sampling-based methods. Using the standard LLM implementation to generate multiple candidate sequences, the cost is equivalent to one LLM inference per data point. Embedding generation is highly efficient, taking approximately 5 minutes for our CLAMBER-3K dataset and 3 minutes for the TriviaQA-5K dataset on a single NVIDIA H100-80GB GPU. The computation of Semantic Volume, which involves matrix multiplication and determinant calculation for each set of perturbations, is computationally negligible (typically within seconds for the entire dataset), compared to the sampling step. This results in an overall complexity of $O(N)$ for our algorithm. 

\section{Geometric interpretation as volume} \label{sec:Appendix-GeometryOfVolume}
We illustrate why $\det(\mV^\top \mV)$ represents the squared volume using $n=2$ and $n=3$ as examples (see Figure~\ref{fig:Parallelepiped}).  For $n=2$, where $\mV = [\vv_1\ \vv_2]$, the volume of the parallelepiped (equivalently, the area of the parallelogram) is simply $\sin(\theta)$, where $\theta$ is the angle between $\vv_1$ and $\vv_2$. In this case, 
\[
\mV^\top \mV =  
\begin{bmatrix}
    \vv_1^\top \vv_1  &\vv_1^\top \vv_2 \\
    \vv_2^\top \vv_1  &\vv_2^\top \vv_2
\end{bmatrix}
=
\begin{bmatrix}
    1  &\cos\theta \\
     \cos\theta &1 
\end{bmatrix}.
\]
Therefore, $\sqrt{\det(\mV^\top \mV)} = \sqrt{1 - \cos^2 \theta} = \sin \theta$. Similarly, for $\mV = [\vv_1\ \vv_2\ \vv_3]$, the $\sqrt{\det(\mV^\top \mV)}$ exactly computes $|\vv_1^\top (\vv_2 \times \vv_3)|$, which is the volume of the parallelepiped.

\begin{figure}[ht]
    \centering
    \begin{minipage}{0.48\linewidth}
        \centering
        \includegraphics[width=\linewidth]{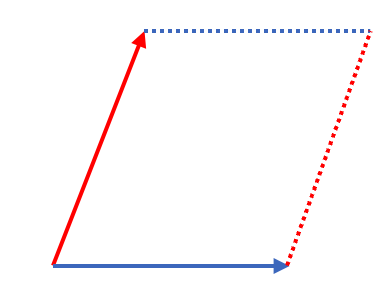}
    \end{minipage}
    \begin{minipage}{0.48\linewidth}
        \centering
        \includegraphics[width=\linewidth]{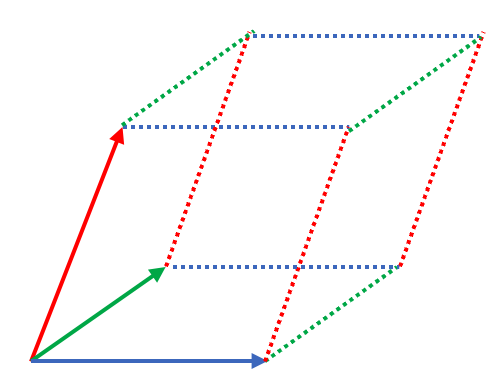}
    \end{minipage}
    \caption{Parallelogram for $n=2$ (left) and parallelepiped for $n=3$ (right)}
    \label{fig:Parallelepiped}
\end{figure}

Beyond its geometric interpretation, $\det(\mV^\top \mV)$ also quantifies the orthogonality of the vectors and is closely related to the determinantal point process (DPP) \cite{kulesza2012determinantal}. DPPs favor diverse or orthogonal subsets by assigning higher probabilities to sets with dissimilar elements, using determinants to model repulsion among points. The Gram matrix $\mV^\top \mV$ is commonly used as the kernel matrix in DPPs, where the determinant of its submatrices determines subset probabilities. For a more detailed discussion, see \cite{li2024rule} and \cite{kulesza2012determinantal}. Moreover, some prior work has considered a similar volume measure to quantify vector diversity and orthogonality in LLM topics, such as the second orthogonality measure in \cite{li2024rule}, which motivates our design of semantic volume.

\section{Proofs} \label{sec:Appendix-Proofs}

\subsection{Proof of Proposition~\ref{prop:OptimalTauFormula}}

\begin{proof}
The formula \eqref{eq:optimal_tau} directly follows from the definition of $F_1$ scores. Recall we have
\[
\text{Precision}(\tau) = \frac{\text{TP}(\tau)}{\text{TP}(\tau) + \text{FP}(\tau)}, \quad
\text{Recall}(\tau) = \frac{\text{TP}(\tau)}{\text{TP}(\tau) + \text{FN}(\tau)},
\]
where $\text{TP}(\tau) = \sum_{i=1}^M \mathbf{1}_{\hat{y}_i(\tau) = 1 \land y_i = 1}$, $\text{FP}(\tau) = \sum_{i=1}^M \mathbf{1}_{\hat{y}_i(\tau) = 1 \land y_i = 0}$, and $\text{FN}(\tau) = \sum_{i=1}^M \mathbf{1}_{\hat{y}_i(\tau) = 0 \land y_i = 1}$. $F_1$ score is the harmonic mean of Precision and Recall:
\begin{align*}
    &F_1(\tau) 
    = 2 \cdot \frac{\text{Precision}(\tau) \times \text{Recall}(\tau)}{\text{Precision}(\tau) + \text{Recall}(\tau)}\\
    =& \frac{2 \cdot \text{TP}(\tau)}{2 \cdot \text{TP}(\tau) + \text{FP}(\tau) + \text{FN}(\tau)}\\
    =& \frac{2 \sum_{i=1}^M \mathbf{1}_{\hat{y}_i(\tau) = y_i = 1} }
    {
    2\sum_{i=1}^M \mathbf{1}_{\hat{y}_i(\tau) = 1 \land y_i = 1}  
    + \sum_{i=1}^M \mathbf{1}_{\hat{y}_i(\tau) = 1 \land y_i = 0}
    + \sum_{i=1}^M \mathbf{1}_{\hat{y}_i(\tau) = 0 \land y_i = 1}
    }\\
    =& \frac{2 \sum_{i=1}^M \mathbf{1}_{\hat{y}_i(\tau) = y_i = 1} }{\sum_{i=1}^M ( \mathbf{1}_{\hat{y}_i(\tau) = 1} + \mathbf{1}_{y_i=1} ) } .
\end{align*}
\end{proof}

\subsection{Proof of Theorem~\ref{thm:DiffEntropy}}

First, in Lemma~\ref{lem:LinearTransformationInvariance}, we formally justify that our semantic volume measure for uncertainty detection is invariant under linear transformations applied to the uncertainty measure. This invariance follows directly from the step of searching for optimal threshold $ \tau^* $ in our method. 

\begin{lemma}[Invariance of semantic volume method under linear transformation] \label{lem:LinearTransformationInvariance}
Our method is invariant under linear transformation on the semantic volume measure $m(\cdot)$. More precisely, 
denote $\tau^\ast$ as the decision boundary defined in \eqref{eq:optimal_tau}, we label 
\[
y_i \bydef \mathbf{1}_{m(s_i) < \tau^\ast}.
\]
Given any linear transformation $T(m_i) \bydef \alpha m_i + \beta$ ($\alpha > 0$) applied to the measure $m(\cdot)$, denote the new labels of our method under the new measure $T(m(\cdot))$ as 
\[
\tilde{y}_i \bydef \mathbf{1}_{T(m(s_i)) < \tilde{\tau}^\ast}.
\]
Then we have
\[
\tilde{y}_i = y_i \quad \text{for all } s_i \in \calD \setminus \calL.
\]
\end{lemma}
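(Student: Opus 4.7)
The plan is to exploit the fact that $T(x) = \alpha x + \beta$ with $\alpha > 0$ is a strictly increasing affine bijection on $\mathbb{R}$, so it preserves all the comparisons that drive both the threshold-selection step and the final labeling step. First I would record the elementary equivalence $m(s) > \tau \iff T(m(s)) > T(\tau)$, which immediately yields $\mathbf{1}_{T(m(s_i)) > T(\tau)} = \mathbf{1}_{m(s_i) > \tau}$ for every index $i$ and every threshold $\tau$.

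Next I would denote by $F_1^{(m)}$ and $F_1^{(T)}$ the $F_1$ score functions built from $m$ and $T\circ m$ respectively via formula \eqref{eq:optimal_tau}. Since both the numerator and denominator in that formula are linear combinations of the indicator variables $\mathbf{1}_{\hat y_i(\tau) = 1}$ and $\mathbf{1}_{y_i=1}$, and the first type of indicator is preserved under the substitution $(m,\tau)\mapsto(T\circ m,T(\tau))$ while the second does not depend on $\tau$ at all, I obtain the identity $F_1^{(T)}(T(\tau)) = F_1^{(m)}(\tau)$ for every $\tau \in \mathbb{R}$. Because $T$ is a bijection on $\mathbb{R}$, taking $\arg\max$ on both sides gives $\arg\max_{\tilde\tau} F_1^{(T)}(\tilde\tau) = T\!\left(\arg\max_{\tau} F_1^{(m)}(\tau)\right)$, so the natural choice of new threshold is $\tilde\tau^\ast = T(\tau^\ast) = \alpha\tau^\ast + \beta$. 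The conclusion then follows by direct computation: for every $s_i \in \calD\setminus\calL$, $\tilde y_i = \mathbf{1}_{T(m(s_i)) < \tilde\tau^\ast} = \mathbf{1}_{\alpha m(s_i) + \beta < \alpha\tau^\ast + \beta} = \mathbf{1}_{m(s_i) < \tau^\ast} = y_i$.

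The only subtle point, and the one I expect to require the most care, is that $F_1^{(m)}$ is piecewise constant with breakpoints at the empirical values $\{m(s_i)\}_{s_i\in\calL}$, so $\arg\max F_1^{(m)}$ is generally a union of intervals rather than a single point, and the algorithm's choice of $\tau^\ast$ inside this set is not canonical. I would handle this by observing that the classifier's prediction $\mathbf{1}_{m(s) < \tau}$ on any fixed $s \in \calD\setminus\calL$ is itself constant in $\tau$ on each interval not containing $m(s)$, so any two maximizers in the same constant ``bin'' yield identical labels on $\calD\setminus\calL$; since $T$ maps these bins bijectively onto the corresponding bins for $T\circ m$, selecting $\tilde\tau^\ast$ from $T(\arg\max F_1^{(m)})$ under the same tie-breaking convention preserves the labels, completing the argument.
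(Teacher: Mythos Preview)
Your proof is correct and follows essentially the same approach as the paper: both arguments rest on the monotonicity of $T$ giving $m_i > \tau \iff T(m_i) > T(\tau)$, which forces $\tilde\tau^\ast = T(\tau^\ast)$ and hence identical labels. Your version is in fact more careful than the paper's brief sketch, since you explicitly verify the identity $F_1^{(T)}(T(\tau)) = F_1^{(m)}(\tau)$ and address the non-uniqueness of the $\arg\max$, a subtlety the paper glosses over.
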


\begin{proof}
If we transform $m_i$ to $\tilde{m}_i = T(m_i)$, the new threshold $\tilde{\tau}^*$ that maximizes the same metric still yields the same partition of the labeled samples. In other words,
\[
m_i > \tau^* \iff T(m_i) = \tilde{m}_i > T(\tau^*),
\]
a result of the simple fact that $m_i > \tau^* \iff \alpha(m_i) + \beta > \alpha(\tau^*) + \beta$. Thus our new threshold $\tilde{\tau}^*$ is effectively $T(\tau^*)$. Note that here if $\alpha < 0$ in $T(m_i) \bydef \alpha m_i + \beta$,  then naturally we should flip the decisions of our method, and then the labels still remain the same.
\end{proof}

Lemma~\ref{lem:GaussianEntropy} below provides the formula for the differential entropy of Gaussian vectors. This is a known result but we include this lemma and provide proof here for completeness.
\begin{lemma}[Differential entropy of Gaussian vectors]\label{lem:GaussianEntropy}
For $X \in \R^d$ that follows a multivariate normal distribution
\[
X \sim \mathcal{N}(\bm{\mu}, \mSigma).
\]
The differential entropy of $X$ is given by
\[
\calH(\vx) =  \frac{1}{2} (\log \det(\mSigma) + d \log(2\pi)  + d).
\]
\end{lemma}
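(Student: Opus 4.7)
The plan is to evaluate $\calH(X) = -\E[\log p(X)]$ directly from the multivariate Gaussian density, assuming $\mSigma \succ 0$ so that $p$ exists as a density on $\R^d$. First I would substitute
\[
p(x) = (2\pi)^{-d/2}\det(\mSigma)^{-1/2}\exp\!\bigl(-\tfrac{1}{2}(x-\vmu)^\top \mSigma^{-1}(x-\vmu)\bigr)
\]
into the definition and split the negative log-density into three pieces: a constant $\tfrac{d}{2}\log(2\pi)$, a constant $\tfrac{1}{2}\log\det(\mSigma)$, and the quadratic term $\tfrac{1}{2}(x-\vmu)^\top \mSigma^{-1}(x-\vmu)$. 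The expectation passes through the first two trivially by linearity.

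The only step requiring actual calculation is the expectation of the quadratic form. For this I would use the scalar identity $a^\top M a = \Tr(M a a^\top)$ with $a = X-\vmu$ and $M = \mSigma^{-1}$, together with linearity of trace and expectation, to obtain
\[
\E\bigl[(X-\vmu)^\top \mSigma^{-1}(X-\vmu)\bigr] = \Tr\bigl(\mSigma^{-1}\,\E[(X-\vmu)(X-\vmu)^\top]\bigr) = \Tr(\mSigma^{-1}\mSigma) = \Tr(\mI_d) = d.
\]
Recombining the three contributions gives exactly
\[
\calH(X) = \tfrac{d}{2}\log(2\pi) + \tfrac{1}{2}\log\det(\mSigma) + \tfrac{d}{2} = \tfrac{1}{2}\bigl(\log\det(\mSigma) + d\log(2\pi) + d\bigr).
\]

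There is no real obstacle here; this is a routine textbook calculation, and the hardest sub-step (the quadratic-form expectation) reduces in one line via the trace trick together with the definition of $\mSigma$ as the covariance of $X$. The only technical caveat worth noting is the non-degeneracy assumption $\mSigma \succ 0$: if $\mSigma$ is merely positive semidefinite, the distribution is supported on a proper affine subspace of $\R^d$, the density does not exist in the usual sense, and the differential entropy should be interpreted as $-\infty$ (consistent with $\log\det(\mSigma) \to -\infty$ in our formula).
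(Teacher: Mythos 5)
Your proof is correct and follows exactly the same route as the paper's: substitute the Gaussian density into $-\E[\log p(X)]$, split into the normalization constant, the $\log\det(\mSigma)$ term, and the quadratic form, then evaluate the latter via the trace trick $\E[(X-\vmu)^\top\mSigma^{-1}(X-\vmu)] = \Tr(\mSigma^{-1}\mSigma) = d$. The only difference is your added remark on the degenerate case $\mSigma \not\succ 0$, which the paper leaves implicit.
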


\begin{proof}
By the definition of the differential entropy and the probability density function of the multi-variate normal distribution, we have
\begin{align*}
   &\calH (X) = -\int_{X} p(x) \log p(x)  dx = -\E_{x\sim X} \left[ \log p(x) \right] \\
   & = -\E \log \left[ \frac{1}{\sqrt{(2\pi)^{d} \det(\mSigma)}}  \exp \left( -\frac{1}{2} (\vx - \bm{\mu})^\top \mSigma^{-1} (\vx - \bm{\mu}) \right) \right]\\
   & = \E \log \sqrt{(2\pi)^{d} \det(\mSigma)}   + \E \left[ \frac{1}{2} (\vx - \bm{\mu})^\top \mSigma^{-1} (\vx - \bm{\mu})\right]\\
   & = \frac{1}{2}\E \left[ d \log (2\pi)  + \log \det(\mSigma) \right]  + \frac{1}{2} \E \left[  (\vx - \bm{\mu})^\top \mSigma^{-1} (\vx - \bm{\mu})\right]\\
   & = \frac{1}{2} \left( 
    d \log (2\pi)  + \log \det(\mSigma)  + \E \left[  (\vx - \bm{\mu})^\top \mSigma^{-1} (\vx - \bm{\mu})\right]
   \right)
\end{align*}
By the cyclic property of trace, we have
\begin{align*}
     \E \left[  (\vx - \bm{\mu})^\top \mSigma^{-1} (\vx - \bm{\mu})\right]
    &= \E \Tr \left[(\vx - \bm{\mu})^\top \mSigma^{-1} (\vx - \bm{\mu})\right]\\
    &= \E \Tr \left[ \mSigma^{-1} (\vx - \bm{\mu}) (\vx - \bm{\mu})^\top \right]\\
    &= \Tr \mSigma^{-1} \E\left[ (\vx - \bm{\mu}) (\vx - \bm{\mu})^\top \right]\\
    &= \Tr \mSigma^{-1} \mSigma\\
    &= d.
\end{align*}
Combine these steps, we get 
\[
  \calH (X) = \frac{1}{2} (\log \det(\mSigma) + d \log(2\pi)  + d).
\]
\end{proof}

\begin{lemma}[Eigenvalue distribution of rank-one perturbed matrix]\label{lem:Perturb-Logdot}
For non-singular matrix $\mM \in \R^{d\times d}$ and any rank-one perturbation $\mS \bydef \mM + \vu\vv^\top$, we have
\begin{equation}\label{eq:Perturb-Logdet}
    \log \det(\mS) = \log \det(\mM) + \log (1 + \vv^\top \mM^{-1} \vu),
\end{equation}
\end{lemma}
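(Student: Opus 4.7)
The plan is to recognize the identity as the classical \emph{matrix determinant lemma} (Sherman--Morrison in determinant form) and derive it by factoring out $\mM$ and then reducing the remaining determinant to a $1{\times}1$ determinant via Sylvester's identity $\det(\mI_d + AB) = \det(\mI_k + BA)$. Throughout, I will implicitly assume $\mM$ is invertible, which is the regime in which the right-hand side of the formula is well defined anyway; the general case could then be handled by a continuity/perturbation argument if needed.

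First I would write
\[
\mS = \mM + \vu\vv^\top = \mM\bigl(\mI_d + \mM^{-1}\vu\vv^\top\bigr),
\]
so multiplicativity of the determinant gives
\[
\det(\mS) = \det(\mM)\,\det\bigl(\mI_d + \mM^{-1}\vu\vv^\top\bigr).
\]
Next I would apply Sylvester's identity with $A = \mM^{-1}\vu \in \R^{d\times 1}$ and $B = \vv^\top \in \R^{1\times d}$, which collapses the second determinant to a scalar:
\[
\det\bigl(\mI_d + \mM^{-1}\vu\,\vv^\top\bigr) = \det\bigl(1 + \vv^\top \mM^{-1}\vu\bigr) = 1 + \vv^\top \mM^{-1}\vu.
\]
Combining and taking logarithms yields
\[
\log\det(\mS) = \log\det(\mM) + \log\bigl(1 + \vv^\top \mM^{-1}\vu\bigr),
\]
which is the matrix determinant lemma; the sign in the statement as printed appears to be a typo for ``$+$''.

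There is no real obstacle here beyond being careful with invertibility of $\mM$. If a self-contained derivation of Sylvester's identity is desired instead of citing it, I would derive it from the block matrix identity
\[
\det\!\begin{pmatrix} \mI_d & -\vu \\ \vv^\top & 1 \end{pmatrix}
\]
computed two ways with Schur complements: eliminating the $(1,1)$ block gives $1 + \vv^\top\vu$ (after scaling by $\mM^{-1}$ on the appropriate row), while eliminating the $(2,2)$ block gives $\det(\mI_d + \vu\vv^\top)$. Applied to $\mM^{-1}\vu$ and $\vv$, this gives exactly the step needed above. No other machinery is required.
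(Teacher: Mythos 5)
Your proposal is correct and rests on the same key fact as the paper: the paper simply cites the matrix determinant lemma (with a reference to Harville) and stops, while you additionally supply the standard derivation by factoring out $\mM$ and collapsing via Sylvester's identity $\det(\mI_d + AB) = \det(\mI_k + BA)$, plus a sketch of a Schur-complement proof of Sylvester's identity itself; that extra scaffolding is sound but not a genuinely different route. You are also right that the sign is off: with $\mS = \mM + \vu\vv^\top$ the identity reads $\log\det(\mS) = \log\det(\mM) + \log\bigl(1 + \vv^\top\mM^{-1}\vu\bigr)$, and both the statement of Lemma~\ref{lem:Perturb-Logdot} and the paper's own one-line proof carry the same erroneous minus sign (the minus-sign form applies to $\mM - \vu\vv^\top$). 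The implicit invertibility assumption on $\mM$ that you flag is also unstated in the paper but is needed for $\mM^{-1}$ to make sense.
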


\begin{proof}
By the matrix determinant lemma \cite{harville1998matrix}, we have
\begin{align*}
    &\det(\mS) = \det( \mM + \vu\vv^\top) = \det(\mM) (1 + \vv^\top \mM^{-1} \vu)\\
    \Longrightarrow & \log\det(\mS) = \log \det(\mM) + \log (1 + \vv^\top \mM^{-1} \vu).
\end{align*}
\end{proof}

Next, we combine the results above to derive the proof of Theorem~\ref{thm:DiffEntropy}.
\begin{proof}[Proof of Theorem~\ref{thm:DiffEntropy}]
For each $s$ (either a query or response) with embedding $\bvx \in \R^d$, we extend it to get $n$ perturbations $\vx_1, \vx_2, \dots, \vx_n$. Equivalently, $\{\vv_i\}_{i \in [n]}$ are perturbations of $\bvv$. Assume the perturbation is Gaussian with covariance matrix $\mSigma$. Then $\{\vx_i\}_{i \in [n]}$ are samples for a Gaussian vector $X \sim \calN(\bvx, \mSigma)$.  Denote $\lambda_i(\mA)$ as the $i$-th largest eigenvalue of matrix $\mA$. Note that  $\mV^\top \mV$ and $\mV \mV^\top$ are both positive semi-definite matrices and share the same nonzero eigenvalues (can be easily proved using SVD of $\mV$). Similarly for $\mX^\top \mX$ and $\mX\mX^\top$. We can diagonalize $\mSigma$ as $\mSigma = Q \Lambda Q^\top$ where $Q$ is an orthogonal matrix and $\Lambda = \text{diag}(\lambda_1, \dots, \lambda_d)$. Define an isotropic Gaussian vector $Z \sim \calN(0, \mI_d)$. Then we can write $X = Q \Lambda^{1/2} Z$. Hence we have
\begin{align*}
    &\E\|X\|^2 = \E[ X^\top X] =  \E [Z^\top \Lambda^{1/2} Q^\top Q \Lambda^{1/2} Z] \\
 &= \E[ Z^\top \Lambda Z] = \E \left[ \sum_{i=1}^d \lambda_i Z_i^2 \right] =  \sum_{i=1}^d \lambda_i \E [Z_i^2 ]  = \Tr(\mSigma).
\end{align*}
Note that in high-dimensions,  $\|X\|^2$ is concentrated around its mean $\E\|X\|^2 = \Tr(\mSigma)$ (particularly for isotropic Gaussian, this is $\sqrt{D}$). Under this regime, we have $\mV \bydef  [\vv_1, \vv_2, \dots, \vv_n] \approx \frac{1}{\sqrt{\Tr(\mSigma)}}\mX$. Let $\alpha = n/\Tr(\mSigma)$. Then
\begin{align*}
     &\log \det(\mV^\top \mV) 
     \approx \log \det\left(\frac{1}{\Tr(\mSigma)}\mX^\top \mX\right) \\
     &= \log \det\left(\alpha \cdot \frac{1}{n}\mX^\top \mX\right) \\
     &= \log \det\left(\frac{1}{n}\mX^\top \mX\right)  + n \log \alpha .
\end{align*}

Since our method sets a threshold $\tau^*$ on the log-determinant value, searched using a held-out set $\calL \in \calD$, the final classification is invariant to shifts (and more generally, to any linear transformation. See Proposition~\ref{lem:LinearTransformationInvariance}). Hence we can ignore the constant term $n \log \alpha $ and only focus on $\log \det\left(\frac{1}{n}\mX^\top \mX\right)$.  We have

\begin{align*}
    &\log \det\left(\frac{1}{n}\mX^\top \mX\right)  = \log \prod_{i=1}^n \lambda_i \left(\frac{1}{n}\mX^\top \mX\right)\\
    &= \sum_{i=1}^n \log \lambda_i \left(\frac{1}{n}\mX^\top \mX\right) = \sum_{i=1}^n \log \lambda_i \left(\frac{1}{n}\mX \mX^\top \right).
\end{align*}
Thus, it suffices to study the eigenvalues of $\frac{1}{n}\mX \mX^\top$. Note that we have
\begin{align*}
\mX \mX^\top - n \mSigma 
    &= \left(\sum_{i=1}^n \vx_i \vx_i^\top\right) - \left(\sum_{i=1}^n (\vx_i-\bvx) (\vx_i-\bvx)^\top \right) \\
    &= \sum_{i=1}^n (\bvx \vx_i^\top - \vx_i \bvx^\top +  \bvx \bvx^\top)  \\
    &=  \bvx \left(\sum_{i=1}^n \vx_i\right)^\top - \left(\sum_{i=1}^n\vx_i\right) \bvx^\top + n \bvx \bvx^\top)  \\
    &\approx  n \bvx \bvx^\top - n \bvx \bvx^\top + n \bvx \bvx^\top  \\
    &= n \bvx \bvx^\top.
\end{align*}
This implies that
\begin{equation}\label{eq:rank-1-perturbation}
    \frac{1}{n}\mX \mX^\top - \mSigma  \approx  \bvx \bvx^\top.
\end{equation}
Thus $\mSigma$ is essentially a rank-one perturbation of $\frac{1}{n}\mX \mX^\top$. By Lemma~\ref{lem:GaussianEntropy}, we know the differential entropy of $X$ is:
\[
  \calH (X) = \frac{1}{2} (\log \det(\mSigma) + d \log(2\pi)  + d).
\]
Again due to the invariance of our method to linear transformation, we can only focus on $\log \det(\mSigma)$. When $d$ is large, by Lemma~\ref{lem:Perturb-Logdot}, 
let $\mM \bydef \frac{1}{n}\mX \mX^\top$, we have $\log \det(\mSigma) = \log \det \mM  + \log (1 + \bvx^\top \mM^{-1} \bvx^\top )$. Note here $\bvx$ is constant, and recall that $\mM$ concentrate around a deterministic matrix 
$\mSigma + \bvx \bvx^\top$ from \eqref{eq:rank-1-perturbation} above (one can show $\|\mM  - (\mSigma + \bvx \bvx^\top)\| \overset{p}{\rightarrow} 0 $ as $n\to\infty$, using standard random matrix theory tools). Therefore combining all above, we have derived that $\log \det \mM $ is essentially shifted $\log \det(\mSigma)$ in high-dimensional regimes, and hence same for $\log \det \frac{1}{n}\mX^\top \mX$. This completes the proof.
\end{proof}

\subsection{Gaussian Assumption}
Our theoretical analysis assumes a Gaussian distribution for the embedding vectors, particularly after PCA projection. This Gaussian assumption facilitates the interpretation of our proposed semantic volume measure in terms of differential entropy. Here, we provide empirical evidence to support the relative Gaussianity of these projected embedding vectors.

To empirically test Gaussianity, we leverage the fact that if samples  follow a multivariate normal distribution with mean  and covariance matrix, then the \textit{squared Mahalanobis distances}:
\[
D_i^2 = (x_i - \mu)^T \Sigma^{-1} (x_i - \mu)
\]
should follow a \textit{chi-square distribution} with $d$ degrees of freedom:
\[
D_i^2 \sim \chi^2_d
\]
Thus we compute $D_i^2$ for each sample, using the empirical mean and covariance matrix of the perturbation vectors, and compare their distribution to the $\chi^2_d$ distribution. We analyze the quantiles of Mahalanobis distances and the theoretical $\chi^2_d$ distribution. Better alignment with the line $y=x$ in these plots indicates stronger evidence of Gaussianity. Example Q–Q plots are shown in Figure~\ref{fig:QQ-plot}, where the empirical distributions generally align well with the theoretical expectations.

\begin{figure}[H]
    \centering
    \begin{subfigure}[b]{0.48\linewidth}
        \includegraphics[width=\textwidth]{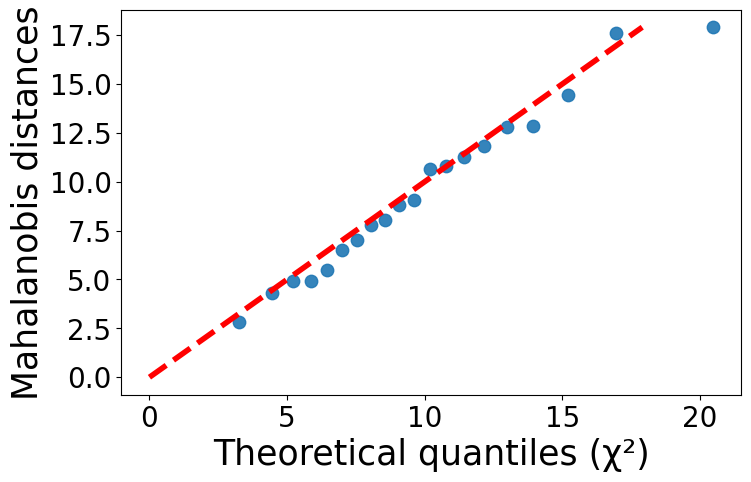}
    \end{subfigure}
    \begin{subfigure}[b]{0.48\linewidth}
        \includegraphics[width=\textwidth]{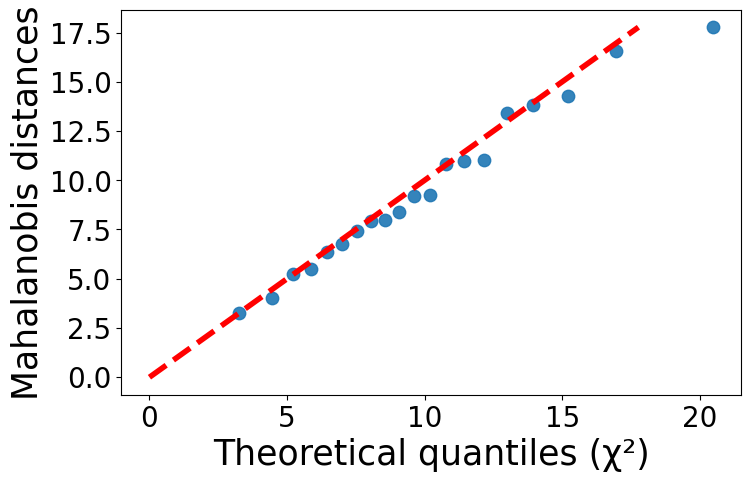}
    \end{subfigure}
    \begin{subfigure}[b]{0.48\linewidth}
        \includegraphics[width=\textwidth]{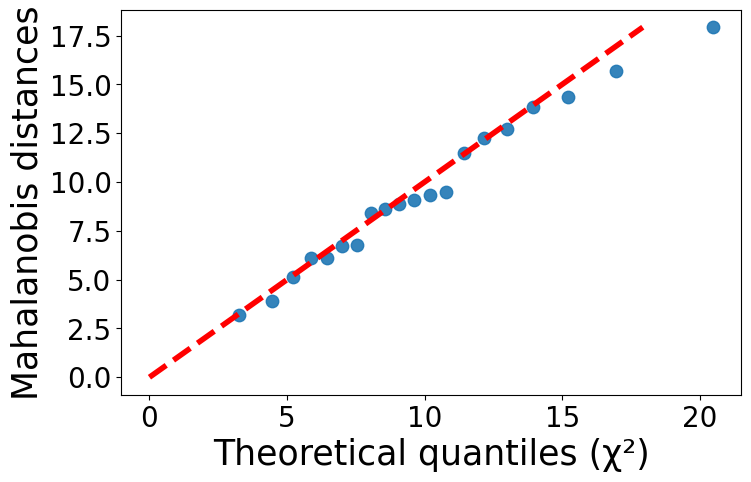}
    \end{subfigure}
    \begin{subfigure}[b]{0.48\linewidth}
        \includegraphics[width=\textwidth]{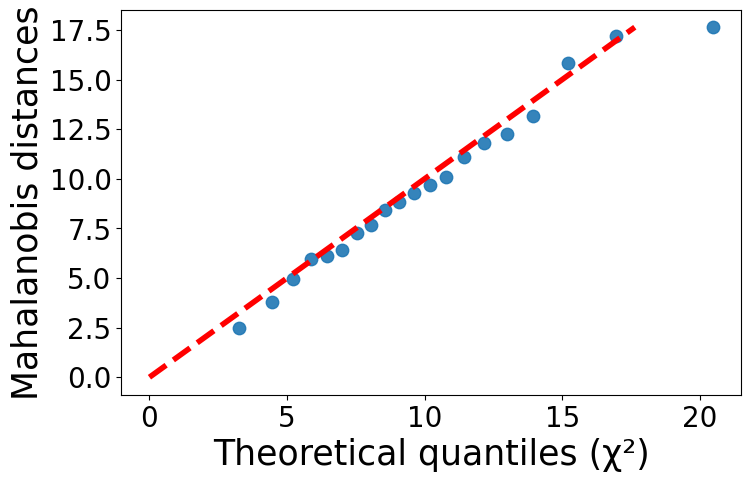}
    \end{subfigure}

    \vspace{0.5em}
    \begin{subfigure}[b]{0.48\linewidth}
        \includegraphics[width=\textwidth]{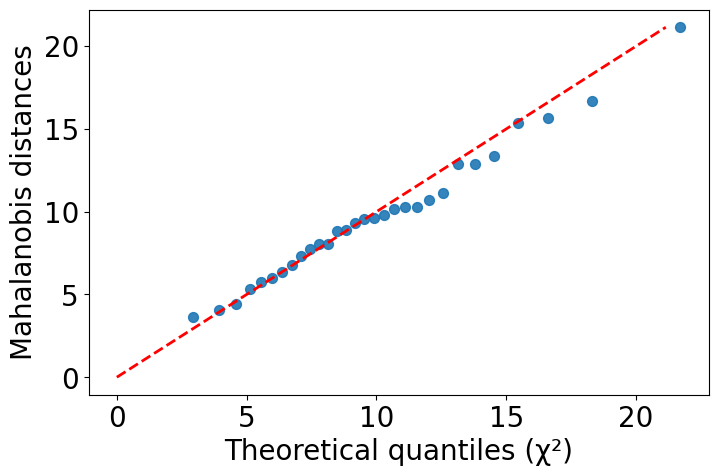}
    \end{subfigure}
    \begin{subfigure}[b]{0.48\linewidth}
        \includegraphics[width=\textwidth]{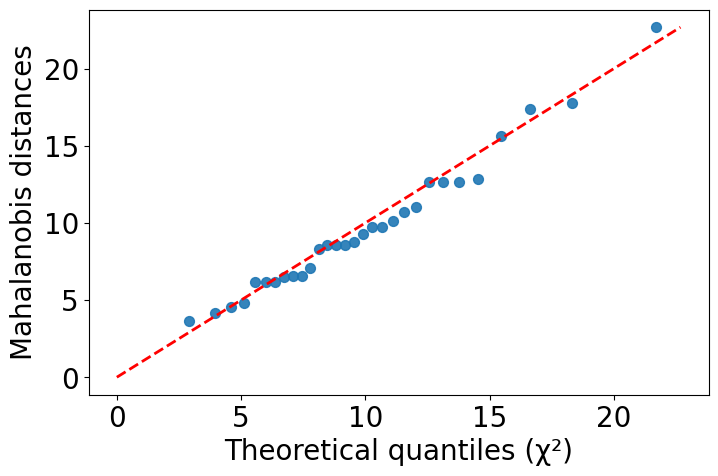}
    \end{subfigure}
    \begin{subfigure}[b]{0.48\linewidth}
        \includegraphics[width=\textwidth]{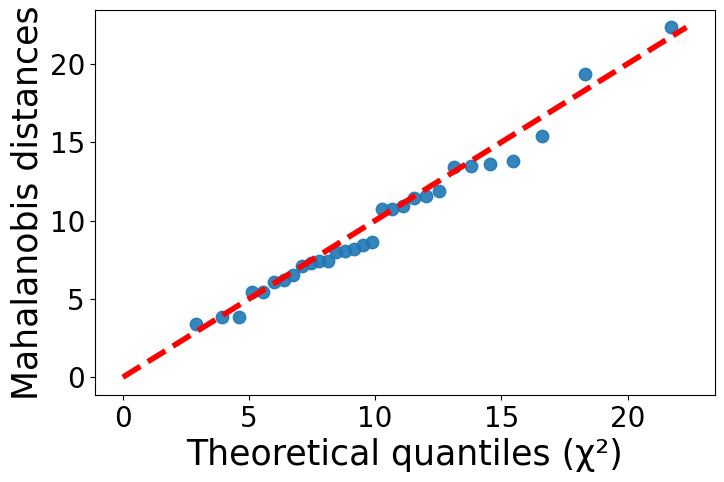}
    \end{subfigure}
    \begin{subfigure}[b]{0.48\linewidth}
        \includegraphics[width=\textwidth]{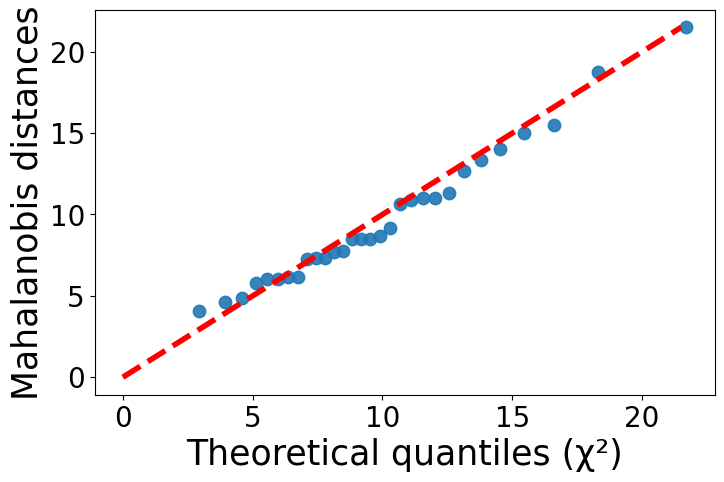}
    \end{subfigure}
    \caption{Mahalanobis–chi-square Q–Q plots for four examples each of queries (top row) and responses (bottom row).}
    \label{fig:QQ-plot}
\end{figure}

To quantitatively evaluate Gaussianity, we compute the $R^2$ score from a linear regression between the empirical quantiles of squared Mahalanobis distances and the theoretical $\chi^2_d$ quantiles:
\[
R^2 = 1 - \frac{\sum_{i=1}^{n}(y_i - \hat{y}_i)^2}{\sum_{i=1}^{n}(y_i - \bar{y})^2},
\]
where $y_i$ are the observed values, $\hat{y}_i$ the theoretical values, and $\bar{y} = \frac{1}{n} \sum_{i=1}^{n} y_i$ the mean of the observations. Note that $R^2 \in (-\infty, 1]$ and a larger $R^2$ value indicates better alignment and thus stronger Gaussianity. The histogram of $R^2$ values is plotted in Figure~\ref{fig:QQ-plot-R2}. If we use 0.8 as a threshold for high Gaussianity, 90.5\% of the query embeddings (in CLAMBER) and 71.4\% of the response embeddings (in TriviaQA) exceed this threshold, suggesting strong empirical support for our Gaussian assumption.

Although these distributions are not perfectly Gaussian in practice, our theoretical results primarily serve as interpretative tools to understand how semantic embedding measures semantic dispersion. We leave the further theoretical exploration on this to future work.

\begin{figure}[H]
    \centering
    \begin{subfigure}[b]{0.48\linewidth}
        \includegraphics[width=\linewidth]{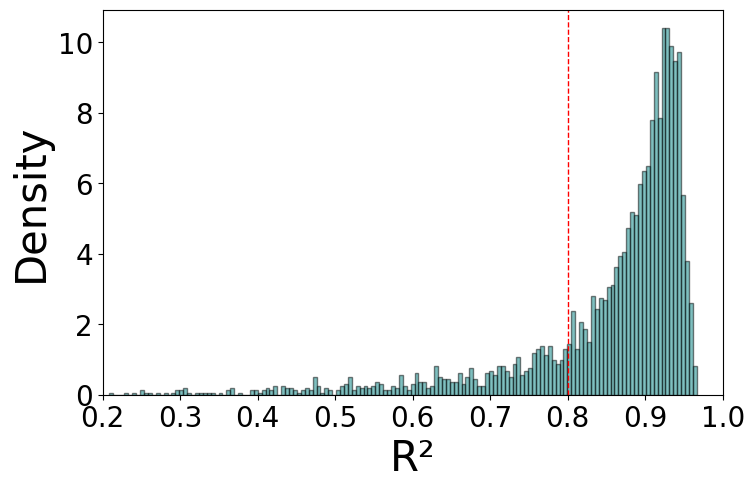}
    \end{subfigure}
    \begin{subfigure}[b]{0.48\linewidth}
        \includegraphics[width=\linewidth]{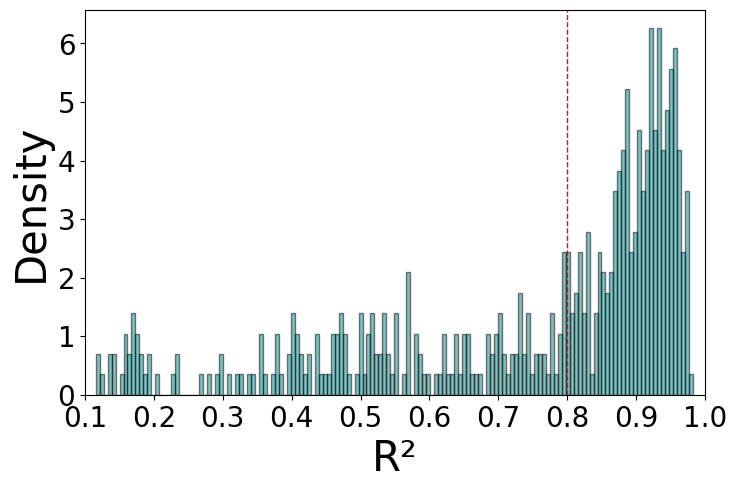}
    \end{subfigure}
    \caption{Distribution of $R^2$.}
    \label{fig:QQ-plot-R2}
\end{figure}

\section{Numerical Stability with $\epsilon$} \label{sec:Appendix-Epsilon}
Here, we present the distribution of the matrix norm $\|\mV^\top \mV\|$ (spectral norm) across our datasets in Figure~\ref{fig:Epsilon-MatrixNormDistribution}. The histograms indicate that the chosen value of $\epsilon = 10^{-10}$ is negligible compared to the typical magnitude of $\|\mV^\top \mV\|$. This confirms that $\epsilon$ only serves to ensure numerical stability rather than influencing the quantification. Moreover, in practice, we observe that exact repetition among sampled perturbations is rare.

\begin{figure}[H]
    \centering
    \begin{minipage}{0.48\linewidth}
        \centering
        \includegraphics[width=\linewidth]{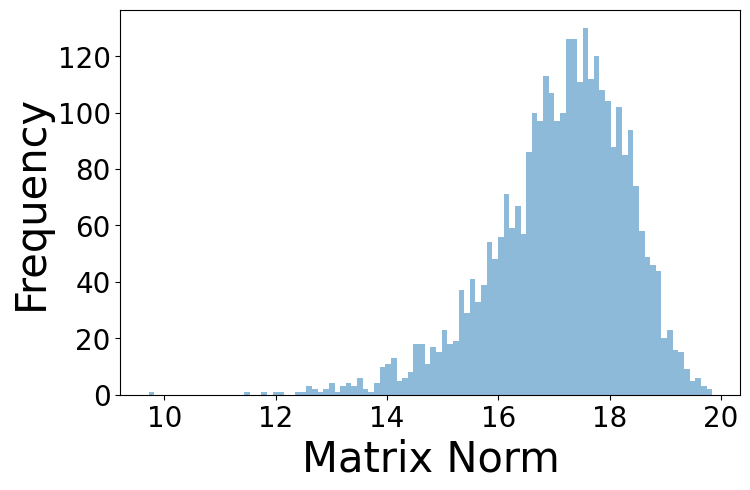}
    \end{minipage}
    \begin{minipage}{0.48\linewidth}
        \centering
        \includegraphics[width=\linewidth]{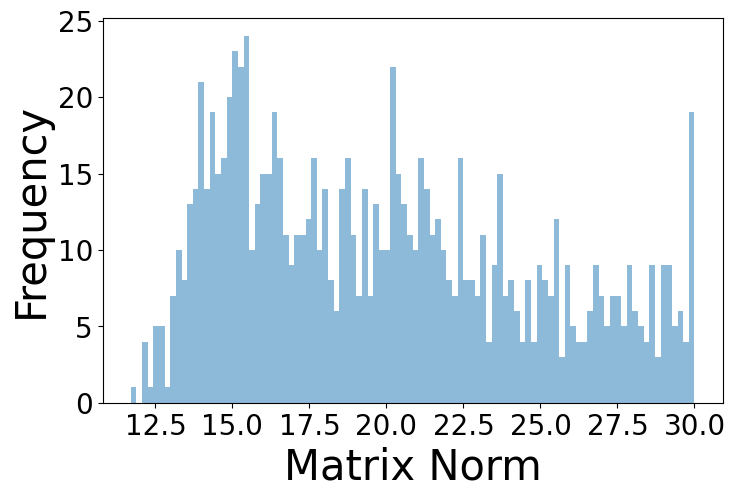}
    \end{minipage}
    \caption{Distribution of the spectral norm  $\|\mV^\top \mV\|$ for the perturbation embedding vectors (the left histogram corresponds to query perturbations and the right is for response perturbations.}
    \label{fig:Epsilon-MatrixNormDistribution}
\end{figure}

\section{Variation: Reduced Dimension $d$} \label{sec:Appendix-Variation-d}
In Figure~\ref{fig:Ablation}, we illustrate the impact of varying $d$ on performance. Additionally, we compare our method with the Semantic Volume that uses the original embedding vectors instead of the projected ones. Our results show that projecting to a lower-dimensional space improves performance, possibly because PCA projections better separate perturbation vectors. Furthermore, our study suggests that there may be an optimal, task-dependent choice of $d$. Exploring this further and characterizing the optimal $d$ is left for future work.

Moreover, alternative dimensionality reduction techniques, such as autoencoders, may also be employed. Investigating the impact of different dimensionality reduction methods remains an important direction for future research.
\begin{figure}[ht]
    \centering
    \begin{minipage}{0.48\linewidth}
        \centering
        \includegraphics[width=1.0\linewidth]{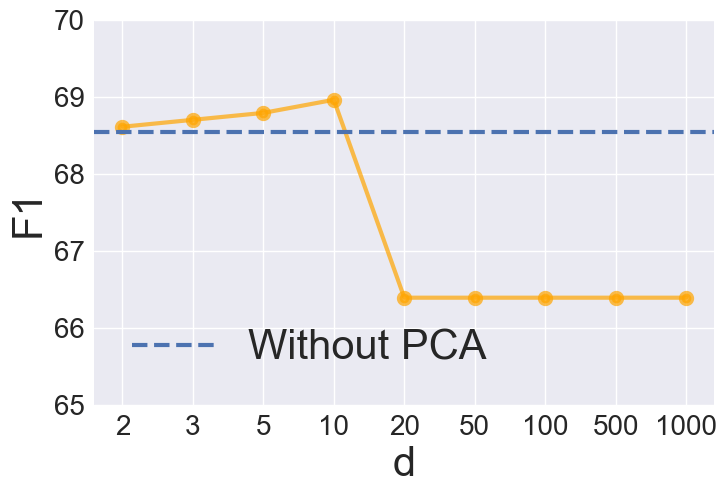}
    \end{minipage}
    \begin{minipage}{0.48\linewidth}
        \centering
        \includegraphics[width=1.0\linewidth]{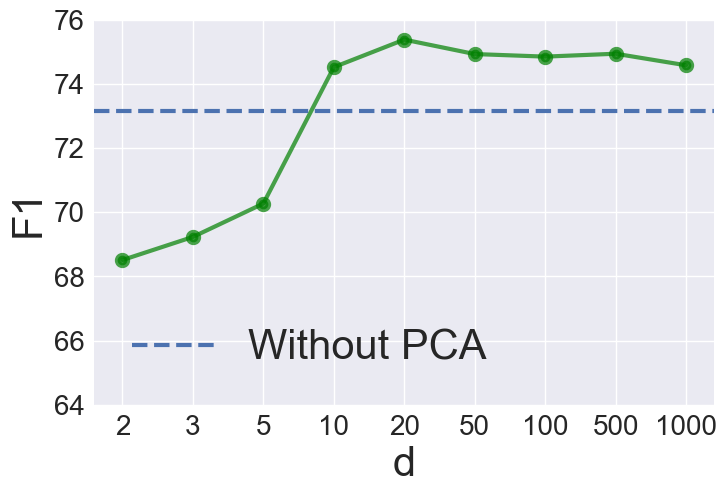}
    \end{minipage}
    \caption{F1 scores for different dimension $d$ in PCA dimension reduction. The dashed line represents the F1 score of the semantic volume method without dimension reduction. The left figure corresponds to the external uncertainty task, while the right figure corresponds to the internal uncertainty task.}
    \label{fig:Ablation}
\end{figure}

\section{Variation: Perturbation Sample Size $n$} \label{sec:Appendix-Variation-n}
The perturbation sample size $n$ is a hyperparameter in our method. A larger $n$ provides a more comprehensive estimate of semantic dispersion but increases computational cost, while a smaller $n$ may fail to capture sufficient variation, reducing the reliability of uncertainty quantification.

Figure~\ref{fig:Variation-n} illustrates how varying $n$ affects classification performance. In general, increasing $n$ tends to improve F1 scores, likely due to a more accurate estimation of dispersion. However, beyond a certain point, the improvements diminish while the cost continues to rise. To balance efficiency and performance, we set $n=20$ for external uncertainty detection and internal uncertainty detection.
\begin{figure}[H]
    \centering
    \begin{subfigure}{0.47\linewidth}
        \centering
        \includegraphics[width=1.0\linewidth]{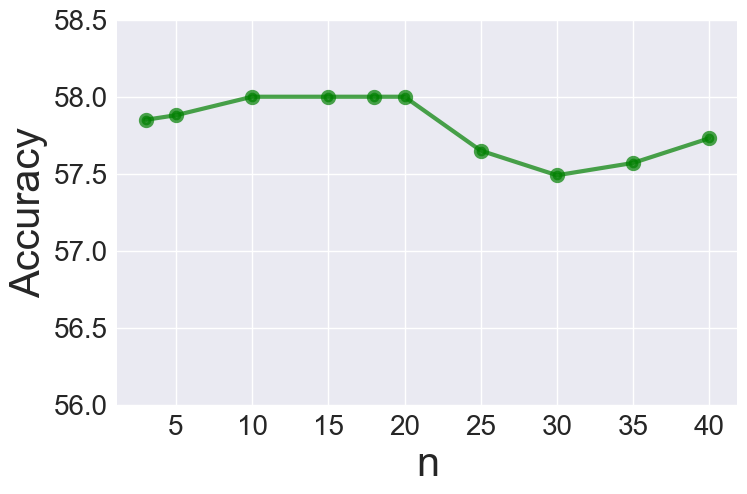}
        \caption{External uncertainty: best accuracy for different $n$.}
    \end{subfigure}
    \begin{subfigure}{0.47\linewidth}
        \centering
        \includegraphics[width=1.0\linewidth]{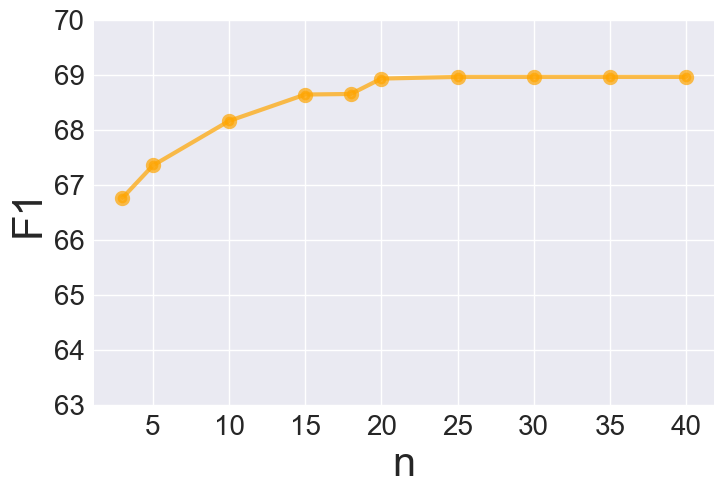}
        \caption{External uncertainty: best F1-score for different $n$.}
    \end{subfigure} \hfill \\[1ex]
    \begin{subfigure}{0.47\linewidth}
        \centering
        \includegraphics[width=1.0\linewidth]{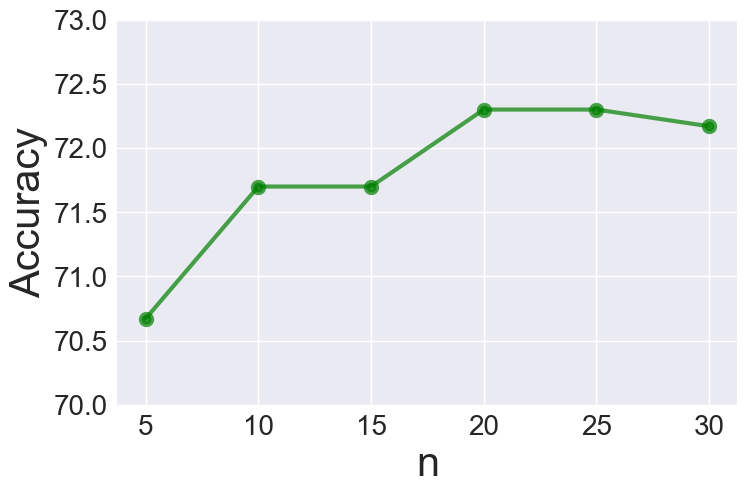}
        \caption{Internal uncertainty: best accuracy for different $n$.}
    \end{subfigure}
    \begin{subfigure}{0.47\linewidth}
        \centering
        \includegraphics[width=1.0\linewidth]{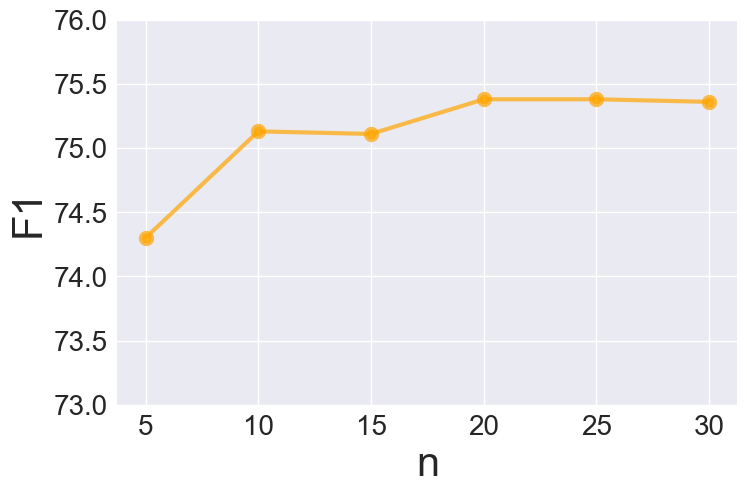}
        \caption{Internal uncertainty: best F1-score for different $n$.}
    \end{subfigure} \hfill
    \caption{}
    \label{fig:Variation-n}
\end{figure}

\section{Variation: Embedding Models} \label{sec:Appendix-Variation-EmbeddingModels}

To examine the effect of different embedding models on our method, we explore two larger sentence-transformers: \texttt{Alibaba-NLP/gte-Qwen2-7B-instruct} (embedding dimension 3584) from the same Qwen family as used in our main results, and \texttt{nvidia/NV-Embed-v2} (embedding dimension 4096) from a different model family. Using the external uncertainty task as a case study, we present the results in Table~\ref{tab:Appendix-Variation-EmbeddingModels}. Our findings indicate that models from different families but with similar sizes produce comparable results. Compared to the Qwen-1.5B sentence transformer used in the main results, performance remains largely similar—while the 7B model achieves slightly higher accuracy but a lower F1 score, this difference may not be statistically significant.

\begin{table}[h]
    \centering
    \renewcommand{\arraystretch}{1.1} 
    \begin{tabular}{l|c|c|c|c}
        \hline
        & \multicolumn{2}{c|}{\textbf{Qwen2-7B}} & \multicolumn{2}{c}{\textbf{NV-Embed}} \\
        \hline
        & \textbf{Accuracy} & \textbf{F1} & \textbf{Accuracy} & \textbf{F1} \\
        \hline
        $n=25$ & $57.8$ & $68.0$ & $57.4$ & $67.5$ \\
        $n=20$ & $58.1$ & $68.0$ & $58.4$ & $67.9$ \\
        $n=15$ & $57.7$ & $68.1$ & $57.9$ & $67.8$ \\
        $n=10$ & $57.2$ & $67.7$ & $57.4$ & $67.5$ \\
        $n=5$  & $56.8$ & $66.9$ & $57.7$ & $67.0$ \\
        \hline
    \end{tabular}
    \caption{Performance with different embedding models.}
    \label{tab:Appendix-Variation-EmbeddingModels}
\end{table}

\section{Variation: Response Generation Model} \label{sec:Appendix-Variation-LargeResponseGenerationModel}
To assess the generalizability of our method, we conduct the experiment for internal uncertainty detection for these response models:
\begin{itemize}
    \item \texttt{Llama3.2-\allowbreak 1B-\allowbreak Instruct} (Table~\ref{tab:Results-Internal})
    \item \texttt{Llama3-8B-Instruct} (Table~\ref{tab:Results-Internal-Llama3-8B-Instruct})
    \item \texttt{Qwen2.5-\allowbreak 1.5B-\allowbreak Instruct} (Table~\ref{tab:Results-Internal-Qwen2.5-1.5B-Instruct})
    \item \texttt{Qwen3-\allowbreak 8B} (Table~\ref{tab:Results-Internal-Qwen3-8B})
    \item \texttt{Qwen3-\allowbreak 14B} (Table~\ref{tab:Results-Internal-Qwen3-14B}).
    \item \texttt{Mistral-\allowbreak 7B-\allowbreak Instruct} (Table~\ref{tab:Results-Internal-Mistral-7B-Instruct}).
\end{itemize}
Notably, our methods consistently outperforms the baselines for all the response model we tested. It is also worth noting that, with a larger model such as \texttt{Llama3-8B-Instruct}, the probability-based methods show a lot higher latency. In our experiment, both Last Token Entropy and Log Probabilities took more than 4 hours on a H100-80GB GPU for each response model.

Furthermore, similar to Table~\ref{tab:Results-Internal}, we observe that methods relying on prompting the LLM (including pTrue) sometimes exhibit a strong bias toward predicting almost exclusively ``Yes'' or ``No''. In Table~\ref{tab:Results-Internal-Llama3-8B-Instruct}, for example, \textit{Prompt Llama3.2-1B-Instruct} and \textit{pTrue (Llama3.2-1B-Instruct)} predict nearly all samples as ``Yes'', whereas \textit{pTrue (Mistral-7B-Instruct)} predicts almost all as ``No''. This instability highlights another drawback of using prompted LLMs for uncertainty prediction.

\begin{table*}[t!]
\centering
\renewcommand{\arraystretch}{1.28}   
\resizebox{0.72\linewidth}{!}{%
\begin{tabular}{l|ccc|ccc}
\hline
\multirow{2}{*}{\textbf{Method}} &
\multicolumn{3}{c|}{\textbf{TriviaQA}} &
\multicolumn{3}{c}{\textbf{SQuAD}} \\ \cline{2-7}
& \textbf{Acc.} & \textbf{F1} & \textbf{AUROC} & \textbf{Acc.} & \textbf{F1} & \textbf{AUROC} \\ \hline
Prompt Llama3.2-1B-Instruct     &  50.1 &  66.5 &  N/A &  50.8 &  66.6 &  N/A \\
Prompt Llama3-8B-Instruct        &  65.5 &  64.0 &  N/A &  58.3 &  51.6 &  N/A \\
Prompt Mistral-7B-Instruct       &  60.4 &  54.3 &  N/A &  50.1 &  0.6 &  N/A \\
\hline
LastTokenEntropy                 &  56.2 &  55.9 &  63.9 &  51.7 &  35.8 &  53.0 \\
LogProb                          &  53.4 &  51.1 &  54.4 &  57.1 &  55.8 &  61.0 \\ \hline
pTrue Llama3.2-1B-Instruct      &  50.0$_{0.0}$ &  66.6$_{0.0}$ &  55.1$_{0.8}$ &  50.2$_{0.6}$ &  54.9$_{0.6}$ &  51.1$_{0.2}$ \\
pTrue Llama3-8B-Instruct         &  55.2$_{0.2}$ &  23.2$_{7.4}$ &  52.8$_{1.0}$ &  50.7$_{1.3}$ &  57.7$_{1.2}$ &  50.6$_{0.5}$ \\
pTrue Mistral-7B-Instruct        &  50.8$_{1.2}$ &  4.3$_{3.2}$ &  72.7$_{1.6}$ &  51.4$_{0.4}$ &  58.5$_{0.2}$ &  48.9$_{0.3}$ \\
LexicalSimilarity                &  64.2$_{0.2}$ &  72.0$_{0.4}$ &  78.7$_{0.5}$ &  67.0$_{0.3}$ &  64.0$_{0.9}$ &  70.4$_{0.2}$ \\
SemanticEntropy                  &  63.5$_{0.5}$ &  69.4$_{1.8}$ &  73.2$_{0.4}$ &  62.5$_{0.1}$ &  65.2$_{1.3}$ &  66.1$_{0.3}$ \\ \hline
\textbf{Semantic Volume (ours)} &  \textbf{72.0$_{0.1}$} &  \textbf{74.9$_{0.0}$} &  \textbf{79.7$_{0.2}$} &  \textbf{68.0$_{0.2}$} &  \textbf{71.0$_{0.1}$} &  \textbf{71.0$_{0.9}$} \\ \hline
\end{tabular}}
\caption{Response model: \texttt{Llama3-8B-Instruct}. Internal uncertainty: Accuracy, F1, and AUROC on TriviaQA and SQuAD.  “N/A’’ indicates AUROC is not applicable (no probabilistic scores).  For sampling-based methods, we report mean $\pm$ std.\ over three trials (std.\ shown as subscript).}
\label{tab:Results-Internal-Llama3-8B-Instruct}
\end{table*}

\begin{table*}[t!]
\centering
\renewcommand{\arraystretch}{1.28}   
\resizebox{0.72\linewidth}{!}{%
\begin{tabular}{l|ccc|ccc}
\hline
\multirow{2}{*}{\textbf{Method}} &
\multicolumn{3}{c|}{\textbf{TriviaQA}} &
\multicolumn{3}{c}{\textbf{SQuAD}} \\ \cline{2-7}
& \textbf{Acc.} & \textbf{F1} & \textbf{AUROC} & \textbf{Acc.} & \textbf{F1} & \textbf{AUROC} \\ \hline
Prompt Llama3.2-1B-Instruct     &  50.6 &  66.7 &  N/A &  50.6 &  66.7 &  N/A \\
Prompt Llama3-8B-Instruct        &  65.2 &  62.5 &  N/A &  64.3 &  62.8 &  N/A \\
Prompt Mistral-7B-Instruct       &  51.2 &  5.6 &  N/A &  50.8 &  3.7 &  N/A \\ \hline
LastTokenEntropy                 &  48.3 &  29.9 &  49.3 &  48.8 &  13.0 &  50.2 \\
LogProb                          &  55.5 &  52.7 &  58.4 &  54.2 &  39.8 &  58.1 \\ \hline
pTrue Llama3.2-1B-Instruct      &  53.3$_{0.7}$ &  52.0$_{0.9}$ &  50.7$_{0.7}$ &  51.8$_{0.6}$ &  49.9$_{0.8}$ &  50.5$_{0.9}$ \\
pTrue Llama3-8B-Instruct         &  52.0$_{0.1}$ &  63.0$_{0.1}$ &  49.9$_{0.3}$ &  52.1$_{0.2}$ &  62.9$_{0.3}$ &  50.3$_{1.1}$ \\
pTrue Mistral-7B-Instruct        &  52.8$_{0.7}$ &  54.0$_{0.9}$ &  50.5$_{0.1}$ &  53.6$_{0.1}$ &  55.4$_{0.2}$ &  51.8$_{0.6}$ \\
LexicalSimilarity                &  64.3$_{0.4}$ &  62.4$_{0.7}$ &  66.0$_{0.1}$ &  65.6$_{0.2}$ &  61.0$_{1.8}$ &  65.7$_{0.2}$ \\
SemanticEntropy                  &  59.5$_{0.7}$ &  51.2$_{2.2}$ &  63.7$_{0.0}$ &  58.5$_{0.6}$ &  55.3$_{1.2}$ &  62.3$_{0.5}$ \\ \hline
\textbf{Semantic Volume (ours)} &  \textbf{65.4$_{0.2}$} &  \textbf{68.6$_{0.0}$} &  \textbf{67.6$_{0.8}$} &  \textbf{66.0$_{0.1}$} &  \textbf{69.2$_{0.2}$} &  \textbf{68.7$_{0.7}$} \\ \hline
\end{tabular}}
\caption{Response model: \texttt{Mistral-7B-Instruct}. Internal uncertainty: Accuracy, F1, and AUROC on TriviaQA and SQuAD.  “N/A’’ indicates AUROC is not applicable (no probabilistic scores).  For sampling-based methods, we report mean $\pm$ std.\ over three trials (std.\ shown as subscript).}
\label{tab:Results-Internal-Mistral-7B-Instruct}
\end{table*}

\begin{table*}[t!]
\centering
\renewcommand{\arraystretch}{1.28}   
\resizebox{0.72\linewidth}{!}{%
\begin{tabular}{l|ccc|ccc}
\hline
\multirow{2}{*}{\textbf{Method}} &
\multicolumn{3}{c|}{\textbf{TriviaQA}} &
\multicolumn{3}{c}{\textbf{SQuAD}} \\ \cline{2-7}
& \textbf{Acc.} & \textbf{F1} & \textbf{AUROC} & \textbf{Acc.} & \textbf{F1} & \textbf{AUROC} \\ \hline
Prompt Llama3.2-1B-Instruct     &  50.9 &  67.0 &  N/A &  51.1 &  66.8 &  N/A \\
Prompt Llama3-8B-Instruct        &  77.0 &  76.6 &  N/A &  62.3 &  59.7 &  N/A \\
Prompt Mistral-7B-Instruct       &  51.8 &  7.7 &  N/A &  50.4 &  2.0 &  N/A \\ \hline
LastTokenEntropy                 &  53.6 &  33.6 &  53.8 &  51.0 &  15.9 &  52.4 \\
LogProb                          &  57.6 &  56.9 &  62.2 &  54.1 &  43.9 &  57.1 \\ \hline
pTrue Llama3.2-1B-Instruct      &  55.1$_{0.2}$ &  58.0$_{0.1}$ &  47.9$_{0.9}$ &  50.7$_{1.0}$ &  53.9$_{1.1}$ &  50.3$_{0.8}$ \\
pTrue Llama3-8B-Instruct         &  53.8$_{0.5}$ &  59.7$_{0.6}$ &  52.4$_{0.2}$ &  51.9$_{0.4}$ &  58.5$_{0.4}$ &  51.9$_{0.5}$ \\
pTrue Mistral-7B-Instruct        &  53.6$_{0.3}$ &  59.2$_{0.3}$ &  47.1$_{0.7}$ &  52.1$_{0.7}$ &  59.2$_{0.5}$ &  50.8$_{0.7}$ \\
LexicalSimilarity                &  74.6$_{0.2}$ &  73.9$_{0.6}$ &  72.8$_{0.1}$ &  63.5$_{0.1}$ &  68.7$_{0.6}$ &  67.5$_{0.1}$ \\
SemanticEntropy                  &  74.0$_{0.5}$ &  73.9$_{1.3}$ &  75.5$_{0.1}$ &  63.8$_{0.4}$ &  68.9$_{0.6}$ &  67.2$_{0.2}$ \\ \hline
\textbf{Semantic Volume (ours)} &  \textbf{75.2$_{0.1}$} &  \textbf{77.7$_{0.1}$} &  \textbf{75.6$_{0.8}$} &  \textbf{65.7$_{0.1}$} &  \textbf{71.4$_{0.1}$} &  \textbf{68.1$_{0.7}$} \\ \hline
\end{tabular}}
\caption{Response model: \texttt{Qwen2.5-1.5B-Instruct}. Internal uncertainty: Accuracy, F1, and AUROC on TriviaQA and SQuAD.  “N/A’’ indicates AUROC is not applicable (no probabilistic scores).  For sampling-based methods, we report mean $\pm$ std.\ over three trials (std.\ shown as subscript).}
\label{tab:Results-Internal-Qwen2.5-1.5B-Instruct}
\end{table*}

\begin{table*}[t!]
\centering
\renewcommand{\arraystretch}{1.28}   
\resizebox{0.72\linewidth}{!}{%
\begin{tabular}{l|ccc|ccc}
\hline
\multirow{2}{*}{\textbf{Method}} &
\multicolumn{3}{c|}{\textbf{TriviaQA}} &
\multicolumn{3}{c}{\textbf{SQuAD}} \\ \cline{2-7}
& \textbf{Acc.} & \textbf{F1} & \textbf{AUROC} & \textbf{Acc.} & \textbf{F1} & \textbf{AUROC} \\ \hline
Prompt Llama3.2-1B-Instruct     &  50.5 &  66.7 &  N/A &  50.6 &  66.6 &  N/A \\
Prompt Llama3-8B-Instruct        &  71.9 &  70.2 &  N/A &  61.1 &  56.3 &  N/A \\
Prompt Mistral-7B-Instruct       &  50.8 &  3.5 &  N/A &  50.2 &  1.1 &  N/A \\ \hline
LastTokenEntropy                 &  57.7 &  49.6 &  60.8 &  52.1 &  26.3 &  51.8 \\
LogProb                          &  56.2 &  38.0 &  62.3 &  55.9 &  48.8 &  59.4 \\ \hline
pTrue Llama3.2-1B-Instruct      &  51.5$_{0.6}$ &  56.2$_{0.4}$ &  50.4$_{0.6}$ &  49.3$_{0.7}$ &  52.6$_{0.6}$ &  49.9$_{0.7}$ \\
pTrue Llama3-8B-Instruct         &  50.9$_{0.2}$ &  57.7$_{0.2}$ &  49.7$_{0.7}$ &  50.2$_{0.8}$ &  56.8$_{0.8}$ &  50.3$_{0.4}$ \\
pTrue Mistral-7B-Instruct        &  52.7$_{0.8}$ &  57.5$_{0.7}$ &  51.5$_{1.0}$ &  51.6$_{0.6}$ &  57.2$_{0.4}$ &  50.3$_{0.5}$ \\
LexicalSimilarity                &  73.7$_{0.7}$ &  71.6$_{1.4}$ &  67.8$_{0.0}$ &  66.2$_{0.1}$ &  63.6$_{1.5}$ &  71.9$_{0.2}$ \\
SemanticEntropy                  &  73.4$_{0.6}$ &  71.5$_{1.6}$ &  69.8$_{0.2}$ &  64.1$_{0.4}$ &  62.5$_{0.9}$ &  69.1$_{0.1}$ \\ \hline
\textbf{Semantic Volume (ours)} &  \textbf{75.2$_{0.2}$} &  \textbf{77.3$_{0.1}$} &  \textbf{75.3$_{0.6}$} &  \textbf{66.7$_{0.4}$} &  \textbf{71.1$_{0.1}$} &  66.7$_{0.7}$ \\ \hline
\end{tabular}}
\caption{Response model: \texttt{Qwen3-8B}. Internal uncertainty: Accuracy, F1, and AUROC on TriviaQA and SQuAD.  “N/A’’ indicates AUROC is not applicable (no probabilistic scores).  For sampling-based methods, we report mean $\pm$ std.\ over three trials (std.\ shown as subscript).}
\label{tab:Results-Internal-Qwen3-8B}
\end{table*}

\begin{table*}[t!]
\centering
\renewcommand{\arraystretch}{1.28}   
\resizebox{0.72\linewidth}{!}{%
\begin{tabular}{l|ccc|ccc}
\hline
\multirow{2}{*}{\textbf{Method}} &
\multicolumn{3}{c|}{\textbf{TriviaQA}} &
\multicolumn{3}{c}{\textbf{SQuAD}} \\ \cline{2-7}
& \textbf{Acc.} & \textbf{F1} & \textbf{AUROC} & \textbf{Acc.} & \textbf{F1} & \textbf{AUROC} \\ \hline
Prompt Llama3.2-1B-Instruct     &  50.5 &  66.8 &  N/A &  50.6 &  66.5 &  N/A \\
Prompt Llama3-8B-Instruct        &  68.6 &  65.9 &  N/A &  59.5 &  53.5 &  N/A \\
Prompt Mistral-7B-Instruct       &  51.0 &  4.1 &  N/A &  50.3 &  1.8 &  N/A \\ \hline
LastTokenEntropy                 &  62.2 &  64.1 &  65.8 &  56.1 &  48.9 &  60.5 \\
LogProb                          &  58.2 &  49.8 &  62.5 &  58.0 &  62.8 &  60.3 \\ \hline
pTrue Llama3.2-1B-Instruct      &  52.5$_{0.3}$ &  56.1$_{0.4}$ &  51.1$_{0.9}$ &  48.9$_{0.4}$ &  51.4$_{0.4}$ &  51.2$_{0.2}$ \\
pTrue Llama3-8B-Instruct         &  51.2$_{0.8}$ &  56.9$_{0.8}$ &  49.8$_{0.5}$ &  50.2$_{0.6}$ &  56.3$_{0.6}$ &  49.8$_{0.3}$ \\
pTrue Mistral-7B-Instruct        &  54.3$_{0.3}$ &  58.2$_{0.4}$ &  53.2$_{0.5}$ &  51.3$_{0.3}$ &  56.7$_{0.3}$ &  51.6$_{0.2}$ \\
LexicalSimilarity                &  75.0$_{0.1}$ &  76.0$_{0.1}$ &  74.7$_{0.1}$ &  64.3$_{0.2}$ &  58.3$_{0.6}$ &  64.5$_{0.2}$ \\
SemanticEntropy                  &  74.4$_{0.2}$ &  74.6$_{0.7}$ &  74.2$_{0.1}$ &  63.1$_{0.5}$ &  57.6$_{1.5}$ &  65.5$_{0.2}$ \\ \hline
\textbf{Semantic Volume (ours)} &  \textbf{75.1$_{0.5}$} &  \textbf{76.9$_{0.6}$} &  \textbf{75.1$_{0.5}$} &  \textbf{66.1$_{0.6}$} &  \textbf{69.0$_{0.5}$} &  \textbf{66.0$_{0.6}$} \\ \hline
\end{tabular}}
\caption{Response model: \texttt{Qwen3-14B}. Internal uncertainty: Accuracy, F1, and AUROC on TriviaQA and SQuAD.  “N/A’’ indicates AUROC is not applicable (no probabilistic scores).  For sampling-based methods, we report mean $\pm$ std.\ over three trials (std.\ shown as subscript).}
\label{tab:Results-Internal-Qwen3-14B}
\end{table*}

\section{Hallucination Detection Pipeline using both External and Internal Uncertainty} \label{sec:Appendix-CompletePipeline}
In this paper, we conducted separate experiments on external uncertainty detection and internal uncertainty detection to demonstrate that the Semantic Volume is an effective method that can be generally applied to both tasks. In practice, these two tasks can be combined into a unified pipeline for hallucination detection (see Figure~\ref{fig:complete_pipeline}): First, we perform the internal uncertainty detection. If high uncertainty is detected in the response, then hallucination is likely to happen. Then in the second step, we check the external uncertainty: If the query is detected to be ambiguous, the LLM should ask a clarification question to the user. After clarification is provided or additional information is provided to resolve the ambiguity in the query, the LLM can then generate the answer to the query. On the other hand, if external uncertainty is ruled out, then the hallucination is likely caused by the internal lack of knowledge of the LLM. This can be addressed through various methods such as retrieval-augmented generation (RAG), reasoning-based approaches, or by leveraging stronger LLMs or human agents.

\begin{figure*}[t!]
    \centering
   \includegraphics[width= 0.9\linewidth]{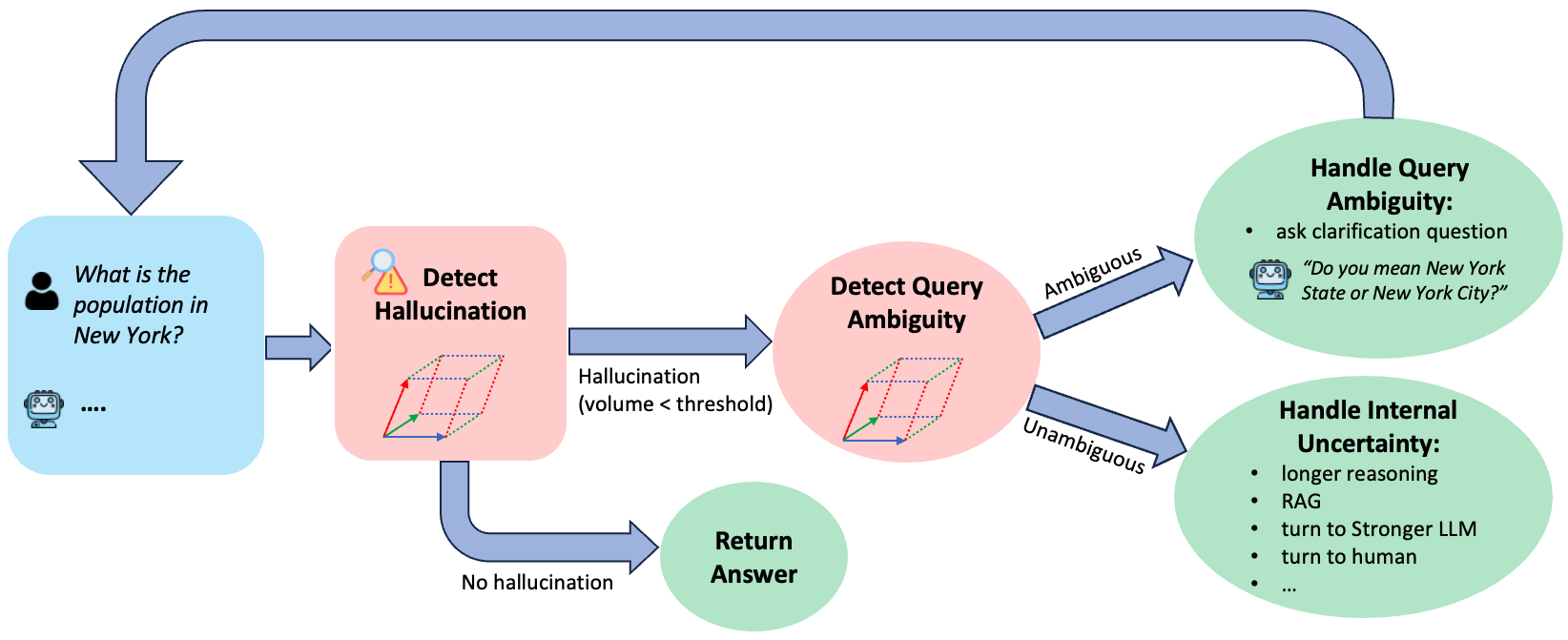}
    \caption{Complete pipeline for hallucination detection utilizing both external and internal uncertainty.}
    \label{fig:complete_pipeline}
\end{figure*}

Moreover, all of our experiments are set up as classification tasks for uncertainty detection, relying on a threshold-based decision. However, these uncertainty measures can also be applied to comparison or ranking tasks.

\section{Prompts} \label{sec:Appendix-Prompts}

\subsection{Prompt template to extend queries.}
\texttt{Provide a paraphrase of the following question with a contextual expansion, while maintaining its core meaning. The filled context information should be diverse  but must be concrete and specific (it cannot be a placeholder or a template). Only reply with the new version of the question and nothing else.}

\texttt{Question:}

\texttt{\{question\}}

\subsection{Prompt template for query ambiguity detection}

\texttt{Is the following question ambiguous? A question is ambiguous if it can be interpreted in multiple ways or has multiple possible answers. If the question is ambiguous, then reply `Yes', otherwise reply `No'. Only reply with `Yes' or `No' and nothing else.}

\texttt{Question:}

\texttt{\{question\}}

\section{Case study}\label{sec:Appendix-CaseStudy}
To further illustrate the distinct levels of uncertainty in queries and responses, we present a detailed case study in this section, providing concrete examples and their corresponding perturbations analyzed using our Semantic Volume method. For the queries in CLAMBER, we analyze three unambiguous examples with Semantic Volumes in the bottom 5\% percentile of the dataset, and three ambiguous examples with large Semantic Volumes (above the 95\% percentile). For demonstration purposes, we present five extended versions (perturbations) for each sample. We observe that the ambiguous queries exhibit high semantic dispersion (as captured by their large Semantic Volumes), while the unambiguous ones have extensions that are quite similar.

Similarly, we present three correct samples and three hallucinated samples with the lowest and highest Semantic Volume values, respectively, from the TriviaQA dataset. For each sample, we provide five additional sampled candidates and show that, for correct answers, the candidates are semantically equivalent, whereas for hallucinated answers, the candidates exhibit substantial diversity.

\subsection{External Uncertainty (Query Ambiguity)}
For the unambiguous queries, we observe that all extensions are quite similar and preserve the same core meaning. For each unambiguous query, we include the original clarification question from CLAMBER and provide our case analysis following each example.

\begin{tcolorbox}[colback=blue!5!white, colframe=blue!60!white, title=Examples of Unambiguous Queries, left=10mm, breakable]
\small 
\textbf{Example 1. Query:}  Who plays harry in the amazing spider man 2?

\bigskip
\textbf{Perturbations (extended queries):}
\begin{enumerate}
    \item In the 2014 superhero film "The Amazing Spider-Man 2," directed by Marc Webb and featuring Andrew Garfield as Peter Parker/Spider-Man, which actor portrays Harry Osborn, the childhood friend of Peter who eventually becomes the Green Goblin?
    \item In the 2014 superhero film "The Amazing Spider-Man 2," directed by Marc Webb and starring Andrew Garfield as Peter Parker/Spider-Man, which actor portrays Harry Osborn, the childhood friend of Peter who becomes the Green Goblin?
    \item In the 2014 superhero film "The Amazing Spider-Man 2," directed by Marc Webb and starring Andrew Garfield as Peter Parker/Spider-Man, which actor portrays the character of Harry Osborn, Peter's childhood friend who becomes the Green Goblin?
    \item In the 2014 superhero film "The Amazing Spider-Man 2," directed by Marc Webb and starring Andrew Garfield as Peter Parker, which actor portrays Harry Osborn, the childhood friend of Peter who eventually becomes the Green Goblin?
    \item In the 2014 superhero film "The Amazing Spider-Man 2," directed by Marc Webb and starring Andrew Garfield as Peter Parker, which actor portrays Harry Osborn, the childhood friend of Peter who becomes the Green Goblin?
\end{enumerate}
\bigskip

\hrule
\bigskip
\textbf{Example 2. Query:}  Who plays young william in a knight's tale?

\bigskip
\textbf{Perturbations (extended queries):}
\begin{enumerate}
    \item  In the 2001 medieval-themed sports drama "A Knight's Tale," starring Heath Ledger as the adult William Thatcher, which actor portrays the character of William as a child during the film's opening sequences set in 14th-century England?
    \item  In the 2001 medieval-themed sports drama "A Knight's Tale," starring Heath Ledger as the adult William Thatcher, which young actor portrays the childhood version of the protagonist during the film's opening scenes set in 14th-century England?
    \item  In the 2001 medieval-themed sports drama "A Knight's Tale," starring Heath Ledger as the adult William Thatcher, which actor portrays the character of William as a child during the film's opening scenes set in 14th-century England?
    \item  In the 2001 medieval-themed sports drama "A Knight's Tale," starring Heath Ledger as the adult William Thatcher, which actor portrays the childhood version of the protagonist during the film's opening sequences set in 14th-century England?
    \item  In the medieval-themed 2001 film "A Knight's Tale," starring Heath Ledger as the adult William Thatcher, which actor portrays the character of William as a child during the opening scenes set in 14th-century England?
\end{enumerate}
\bigskip

\hrule
\bigskip
\textbf{Example 3. Query:}  When did david cassidy release i think i love you?

\bigskip
\textbf{Perturbations (extended queries):}
\begin{enumerate}
    \item In what year did the teen idol and star of "The Partridge Family," David Cassidy, release his breakthrough hit single "I Think I Love You," which topped the Billboard Hot 100 chart and helped solidify his status as a 1970s pop music sensation?
    \item In what year did the teen idol and former star of "The Partridge Family," David Cassidy, release his chart-topping hit single "I Think I Love You," which became an anthem for young romance in the early 1970s and helped solidify his status as a pop sensation?
    \item In what year did the former teen idol and star of "The Partridge Family," David Cassidy, release his chart-topping hit single "I Think I Love You," which became an anthem for young romance in the early 1970s and helped solidify his status as a pop culture icon?
    \item In which year did the teen heartthrob and former star of "The Partridge Family," David Cassidy, release his chart-topping hit single "I Think I Love You," which became an anthem for young romance in the early 1970s and helped solidify his status as a pop idol?
    \item In which year did the teen idol and star of "The Partridge Family," David Cassidy, release his chart-topping hit single "I Think I Love You," which became an anthem for young romance in the early 1970s and helped solidify his status as a pop music sensation?

\end{enumerate}
\bigskip

\end{tcolorbox}

\begin{tcolorbox}[colback=blue!5!white, colframe=blue!60!white, title=Examples of Ambiguous Queries, left=10mm, breakable]
\small 
\textbf{Example 4. Query:}  What is the theme of Togetherness?

\bigskip
\textbf{Clarifying Question:}  Are you referring to the TV show or the concept of togetherness in general?

\bigskip
\textbf{Perturbations (extended queries):}
\begin{enumerate}
    \item How does the concept of unity and shared experiences manifest as the central motif in the 2015 Danish TV series "Togetherness," created by Mark and Jay Duplass, which explores the complexities of marriage, friendship, and personal growth among a group of middle-aged characters in Los Angeles?
    \item In what ways does the concept of unity and shared experience manifest itself as a central motif in "Togetherness," the HBO comedy-drama series created by Jay and Mark Duplass? How does the show explore the complexities of human relationships, particularly among its four main characters living in Los Angeles, as they navigate marriage, friendship, and personal growth?
    \item How does the theme of togetherness manifest in human relationships, whether it's in the tight-knit bonds of family gatherings during Thanksgiving dinner, the camaraderie among teammates celebrating a championship victory, or the collective spirit of strangers coming together to help their community rebuild after a natural disaster?
    \item How does the concept of unity and collective experience manifest as a central motif in the novel "Togetherness" by acclaimed author Sarah Chen, which explores the interconnected lives of residents in a small coastal town facing environmental challenges?
    \item How does the central message of unity and human connection manifest throughout the album "Togetherness" by jazz trumpeter Roy Hargrove, particularly in tracks like "Trust" and "Strasbourg/St. Denis" which emphasize collective improvisation and musical dialogue between band members?
\end{enumerate}
\bigskip

\textbf{Analysis:} The extended queries above illustrate distinct interpretations of the ambiguous term ``Togetherness.'' The first two extensions interpret it as the HBO television series Togetherness (2015) by Mark and Jay Duplass. The third treats it as a general conceptual inquiry into human connection, unity, and solidarity. The fourth interprets ``Togetherness'' as the title of a novel, while the final extension refers to a music album--Togetherness by jazz trumpeter Roy Hargrove. These diverse extensions highlight the semantic ambiguity of the original query, as well as the high variability in meaning captured by the Semantic Volume method.

\bigskip
\hrule
\bigskip
\textbf{Example 5. Query:}  What do you call a ten sided shape?

\bigskip
\textbf{Clarifying Question:} Which one: 2d shape, or 3d shape?

\bigskip
\textbf{Perturbations (extended queries):}
\begin{enumerate}
    \item What is the geometric term for a polygon with ten sides and ten angles, such as the shape found in the British 50 pence coin or the logo of the Detroit Red Wings hockey team?
    \item In geometry class, while studying regular polygons, the teacher introduced various shapes. After discussing triangles, squares, pentagons, and octagons, she asked the students: "Moving on to more complex figures, what is the proper term for a polygon with exactly ten edges and ten vertices?"
    \item  In the field of geometry, particularly when studying polygons and their properties, what is the specific term used to describe a two-dimensional figure with exactly ten straight sides and ten vertices? This shape is commonly encountered in various applications, from architectural designs to board game dice, and understanding its proper nomenclature is essential for mathematical discussions and practical applications.
    \item  What is the mathematical term for a polygon with ten edges and ten vertices, like the shape of the US Pentagon building if it had five additional sides or the cross-section of some crystalline minerals found in nature?
    \item  What is the geometric term for a polygon with ten edges and ten vertices, like the profile of a traditional Chinese pagoda or the cross-section of certain crystals found in mineralogy?
\end{enumerate}
\bigskip

\textbf{Analysis:} While all extensions refer to ten-sided geometric figures, their interpretations vary in subtle ways. The first three primarily describe 2D polygons, consistently referencing shapes with ten sides and vertices in mathematical or educational contexts. The last two, however, introduce 3D analogies, such as architectural structures or crystal cross-sections. This variation highlights the ambiguity in the original query.

\bigskip
\hrule
\bigskip
\textbf{Example 6. Query:}  What is the cure for Cured?

\bigskip
\textbf{Clarifying Question:} Are you referring to a medical condition or a song?

\bigskip
\textbf{Perturbations (extended queries):}
\begin{enumerate}
    \item How can we effectively treat and reverse the effects of the fictional disease known as "Cured," which paradoxically causes illness despite its name, in the dystopian world of Lauren DeStefano's Chemical Garden trilogy?
    \item What medical treatment or intervention has been discovered to effectively eliminate or reverse the effects of Cured syndrome, a rare genetic disorder characterized by progressive muscle weakness and cognitive decline, which was first identified in 2008 by Dr. Elena Cured at Johns Hopkins University?
    \item After watching the 2020 psychological horror film "Cured" directed by Sean Ellis, where a woman discovers she has a mysterious condition that makes her immune to diseases but also causes violent outbursts, what medical treatment or intervention ultimately resolves her affliction?
    \item How can we treat or reverse "Cured," the psychological horror game released in 2016 that follows a patient trapped in an abandoned mental asylum who must escape while uncovering dark secrets about illegal experiments and supernatural phenomena?
    \item What is the medical treatment or remedy for the ironic condition known as Cured Syndrome - a rare autoimmune disorder first identified in Uppsala, Sweden in 1982 that causes patients to experience persistent sensations of perfect health despite having serious underlying symptoms?
\end{enumerate}
\bigskip

\textbf{Analysis:} The term ``Cured'' is interpreted in five distinct ways: as a fictional disease, a genetic disorder, a psychological horror film, a video game, and an ironically named medical condition. These varied interpretations highlight the ambiguity of the original query and the broad semantic spread, which is captured by our Semantic Volume method.

\bigskip

\end{tcolorbox}

\subsection{Internal Uncertainty (Response Uncertainty)}
In the cases where the responses exhibit low Semantic Volumes (the first three examples), the answers are correct and the sampled candidates are nearly identical. This consistency reflects the model’s strong confidence and low uncertainty. In contrast, for cases with high Semantic Volumes (last three examples), the sampled candidates show substantial diversity.

\begin{tcolorbox}[colback=blue!5!white, colframe=blue!60!white, title=Examples of Correct Responses, left=10mm, breakable]
\small 
\textbf{Example 1. Query:}  The Ebro River is in which country?

\textbf{Response:} Spain

\textbf{Ground Truth Answer:} Spain

\bigskip
\textbf{Perturbations (candidate responses):}
\begin{enumerate}
    \item Spain
    \item Spain
    \item Spain
    \item Spain
    \item Spain
\end{enumerate}
\bigskip

\hrule
\bigskip
\textbf{Example 2. Query:} Gorgonzola cheese is from which country?

\textbf{Response:} Italy

\textbf{Ground Truth Answer:} Italy

\bigskip
\textbf{Perturbations (candidate responses):}
\begin{enumerate}
    \item Italy
    \item Italy
    \item Italy
    \item Italy
    \item Italy
\end{enumerate}
\bigskip

\hrule
\bigskip
\textbf{Example 3. Query:} Tipperary is in which European country??

\textbf{Response:} Ireland

\textbf{Ground Truth Answer:} Ireland

\bigskip
\textbf{Perturbations (candidate responses):}
\begin{enumerate}
    \item Ireland
    \item Ireland
    \item Ireland
    \item Republic of Ireland
    \item Ireland
\end{enumerate}
\bigskip

\end{tcolorbox}

\begin{tcolorbox}[colback=blue!5!white, colframe=blue!60!white, title=Examples of Hallucinated Responses, left=10mm, breakable]
\small 
\textbf{Example 4. Query:}  In Top Cat, who was the voice of Choo Choo?

\textbf{Response:} John Kassir

\textbf{Ground Truth Answer:} Marvin Kaplan

\bigskip
\textbf{Perturbations (candidate responses):}
\begin{enumerate}
    \item Ricky Gervais
    \item Bob Denver
    \item Steve O’Brien
    \item Mark Linn-Baker 
    \item Seth Carwell
\end{enumerate}
\bigskip

\hrule
\bigskip
\textbf{Example 5. Query:} Who was the artist on the TV series Vision On?

\textbf{Response:} David Hasselhoff

\textbf{Ground Truth Answer:} Tony Hart

\bigskip
\textbf{Perturbations (candidate responses):}
\begin{enumerate}
    \item Michael Welch
    \item Brian Thompson
    \item Burt Reynolds
    \item Nicole Boyer
    \item Jim Parsons
\end{enumerate}
\bigskip

\hrule
\bigskip
\textbf{Example 6. Query:} Who wrote the novel The Go Between?

\textbf{Response:} Edith Pattou

\textbf{Ground Truth Answer:} L. P. Hartley

\bigskip
\textbf{Perturbations (candidate responses):}
\begin{enumerate}
    \item Jennifer Weiner
    \item Jojo Moyes
    \item John Irving
    \item Margaret Mitchell
    \item Alison Bechdel
\end{enumerate}
\bigskip

\end{tcolorbox}

\section{Variation: Query Extension Model} \label{sec:Appendix-Variation-ExtensionModels}
In the external uncertainty task, we investigate the impact of the query extension model by replacing the default \texttt{Claude3.7-Sonnet} with smaller models and reporting their corresponding results. Specifically, we explore the following query extension models:
\begin{itemize}
    \item \texttt{Claude3.7-\allowbreak Sonnet} (Tables~\ref{tab:Results-External}),
    \item \texttt{Qwen2.5-\allowbreak 1.5B-\allowbreak Instruct} (Table~\ref{tab:Results-External-Qwen2.5-1.5B-Instruct}),
    \item \texttt{Llama3-8B-Instruct} (Table~\ref{tab:Results-External-Llama3-8B-Instruct}),
    \item \texttt{Qwen3-\allowbreak 8B} (Table~\ref{tab:Results-External-Qwen3-8B}).
\end{itemize}
In addition, for baseline methods that rely on token probabilities (e.g., \textit{Last Token Entropy} and \textit{Log Probabilities}), we enhance robustness by evaluating each method with multiple underlying models for token extraction, including \texttt{Qwen2.5-\allowbreak 1.5B-\allowbreak Instruct}, \texttt{Llama3.2-\allowbreak 1B-\allowbreak Instruct}, and \texttt{Llama3-\allowbreak 8B-\allowbreak Instruct}.

Across all tested query extension models, our \textit{Semantic Volume} method consistently outperforms the baselines in general. We also observe that stronger query extension models, such as \texttt{Claude3.7-Sonnet}, lead to generally improved performance for all sampling-based approaches.

\begin{table*}[t!]
\centering
\renewcommand{\arraystretch}{1.09}   
\resizebox{0.65\textwidth}{!}{%
\begin{tabular}{l|cc|cc}
\hline
\multirow{2}{*}{\textbf{Method}} &
\multicolumn{2}{c|}{\textbf{CLAMBER}} &
\multicolumn{2}{c}{\textbf{AmbigQA}} \\ \cline{2-5}
& \textbf{Acc.} & \textbf{F1} & \textbf{Acc.} & \textbf{F1} \\ \hline
Vicuna-13B (zero-shot)                 & 50.6 & 39.9 & 45.1 & 19.5 \\
Llama2-13B-Instruct (zero-shot)        & 45.6 & 43.6 & 46.2 & 22.4 \\
Llama2-70B-Instruct (zero-shot)        & 50.3 & 34.2 & 49.4 & 64.5 \\
Llama3.2-3B-Instruct (zero-shot)       & 51.5 & 37.7 & 49.3 & 33.8 \\
ChatGPT (zero-shot)                    & 54.3 & 53.4 & 54.8 & 52.8 \\
ChatGPT (few-shot)                     & 51.6 & 49.2 & 54.9 & 53.1 \\
ChatGPT (zero-shot + CoT)              & 57.3 & 56.9 & 54.2 & 55.1 \\
ChatGPT (few-shot + CoT)               & 53.6 & 51.4 & 54.0 & 50.8 \\ \hline
LastTokenEntropy (Qwen2.5-1.5B-Instruct)     &  49.4 &  66.0 &  51.7 &  60.4 \\
LastTokenEntropy (Llama3.2-1B-Instruct)     &  49.5 &  66.1 &  50.7 &  64.7 \\
LastTokenEntropy (Llama3-8B-Instruct)  &  49.4 &  66.1 &  51.5 &  64.9 \\
LogProb (Qwen2.5-1.5B-Instruct)              &  49.4 &  64.9 &  55.9 &  65.2 \\
LogProb (Llama3.2-1B-Instruct)              &  48.9 &  63.8 &  52.7 &  64.2 \\
LogProb (Llama3-8B-Instruct)           &  49.2 &  63.6 &  53.1 &  65.3 \\ \hline
pTrue (Llama3-8B-Instruct)             &  49.7$_{0.3}$ &  41.7$_{0.4}$ &  53.3$_{0.6}$ &  52.4$_{0.9}$ \\
pTrue (Mistral-7B-Instruct)                  &  50.0$_{0.0}$ &  0.0$_{0.0}$ &  50.0$_{0.0}$ &  0.0$_{0.0}$ \\
Lexical Similarity                           &  51.1$_{0.3}$ &  57.0$_{0.6}$ &  55.0$_{0.1}$ &  21.3$_{0.3}$ \\
Semantic Entropy                             &  49.4$_{0.4}$ &  48.7$_{1.9}$ &  49.9$_{0.6}$ &  64.6$_{0.8}$ \\ \hline
\textbf{Semantic Volume (ours)}              &  \textbf{57.8$_{0.7}$} &  \textbf{67.1$_{0.5}$} &  \textbf{57.5$_{0.8}$} &  \textbf{67.0$_{0.5}$} \\ \hline
\end{tabular}}
\caption{Query Extension Model: \texttt{Qwen2.5-1.5B-Instruct}. External uncertainty: Accuracy and F1 on CLAMBER and AmbigQA. Sampling‐based methods report mean $\pm$ std.\ over three trials (std.\ as subscript).}
\label{tab:Results-External-Qwen2.5-1.5B-Instruct}
\end{table*}

\begin{table*}[t!]
\centering
\renewcommand{\arraystretch}{1.09}   
\resizebox{0.65\textwidth}{!}{%
\begin{tabular}{l|cc|cc}
\hline
\multirow{2}{*}{\textbf{Method}} &
\multicolumn{2}{c|}{\textbf{CLAMBER}} &
\multicolumn{2}{c}{\textbf{AmbigQA}} \\ \cline{2-5}
& \textbf{Acc.} & \textbf{F1} & \textbf{Acc.} & \textbf{F1} \\ \hline
Vicuna-13B (zero-shot)                 & 50.6 & 39.9 & 45.1 & 19.5 \\
Llama2-13B-Instruct (zero-shot)        & 45.6 & 43.6 & 46.2 & 22.4 \\
Llama2-70B-Instruct (zero-shot)        & 50.3 & 34.2 & 49.4 & 64.5 \\
Llama3.2-3B-Instruct (zero-shot)       & 51.5 & 37.7 & 49.3 & 33.8 \\
ChatGPT (zero-shot)                    & 54.3 & 53.4 & 54.8 & 52.8 \\
ChatGPT (few-shot)                     & 51.6 & 49.2 & 54.9 & 53.1 \\
ChatGPT (zero-shot + CoT)              & 57.3 & 56.9 & 54.2 & 55.1 \\
ChatGPT (few-shot + CoT)               & 53.6 & 51.4 & 54.0 & 50.8 \\ \hline
LastTokenEntropy (Qwen2.5-1.5B-Instruct)     &  49.4 &  66.0 &  51.7 &  60.4 \\
LastTokenEntropy (Llama3.2-1B-Instruct)     &  49.5 &  66.1 &  50.7 &  64.7 \\
LastTokenEntropy (Llama3-8B-Instruct)  &  49.4 &  66.1 &  51.5 &  64.9 \\
LogProb (Qwen2.5-1.5B-Instruct)              &  49.4 &  64.9 &  55.9 &  65.2 \\
LogProb (Llama3.2-1B-Instruct)              &  48.9 &  63.8 &  52.7 &  64.2 \\
LogProb (Llama3-8B-Instruct)           &  49.2 &  63.6 &  53.1 &  65.3 \\ \hline
pTrue (Llama3-8B-Instruct)             &  49.1$_{0.4}$ &  42.0$_{0.3}$ &  51.9$_{0.4}$ &  54.7$_{0.4}$ \\
pTrue (Mistral-7B-Instruct)                  &  50.0$_{0.0}$ &  0.0$_{0.0}$ &  50.0$_{0.0}$ &  0.0$_{0.0}$ \\
Lexical Similarity                           &  51.2$_{0.2}$ &  62.5$_{0.5}$ &  49.9$_{0.3}$ &  66.0$_{0.3}$ \\
Semantic Entropy                             &  49.7$_{0.2}$ &  54.1$_{5.2}$ &  48.9$_{0.5}$ &  61.2$_{2.2}$ \\ \hline
\textbf{Semantic Volume (ours)}              &  \textbf{57.4$_{0.6}$} &  \textbf{66.3$_{0.5}$} &  54.8$_{0.6}$ &  \textbf{67.2$_{0.6}$} \\ \hline
\end{tabular}}
\caption{Query Extension Model: \texttt{Llama3-8B-Instruct}. External uncertainty: Accuracy and F1 on CLAMBER and AmbigQA. Sampling‐based methods report mean $\pm$ std.\ over three trials (std.\ as subscript).}
\label{tab:Results-External-Llama3-8B-Instruct}
\end{table*}

\begin{table*}[t!]
\centering
\renewcommand{\arraystretch}{1.09}   
\resizebox{0.65\textwidth}{!}{%
\begin{tabular}{l|cc|cc}
\hline
\multirow{2}{*}{\textbf{Method}} &
\multicolumn{2}{c|}{\textbf{CLAMBER}} &
\multicolumn{2}{c}{\textbf{AmbigQA}} \\ \cline{2-5}
& \textbf{Acc.} & \textbf{F1} & \textbf{Acc.} & \textbf{F1} \\ \hline
Vicuna-13B (zero-shot)                 & 50.6 & 39.9 & 45.1 & 19.5 \\
Llama2-13B-Instruct (zero-shot)        & 45.6 & 43.6 & 46.2 & 22.4 \\
Llama2-70B-Instruct (zero-shot)        & 50.3 & 34.2 & 49.4 & 64.5 \\
Llama3.2-3B-Instruct (zero-shot)       & 51.5 & 37.7 & 49.3 & 33.8 \\
ChatGPT (zero-shot)                    & 54.3 & 53.4 & 54.8 & 52.8 \\
ChatGPT (few-shot)                     & 51.6 & 49.2 & 54.9 & 53.1 \\
ChatGPT (zero-shot + CoT)              & 57.3 & 56.9 & 54.2 & 55.1 \\
ChatGPT (few-shot + CoT)               & 53.6 & 51.4 & 54.0 & 50.8 \\ \hline
LastTokenEntropy (Qwen2.5-1.5B-Instruct)     &  49.4 &  66.0 &  51.7 &  60.4 \\
LastTokenEntropy (Llama3.2-1B-Instruct)     &  49.5 &  66.1 &  50.7 &  64.7 \\
LastTokenEntropy (Llama3-8B-Instruct)  &  49.4 &  66.1 &  51.5 &  64.9 \\
LogProb (Qwen2.5-1.5B-Instruct)              &  49.4 &  64.9 &  55.9 &  65.2 \\
LogProb (Llama3.2-1B-Instruct)              &  48.9 &  63.8 &  52.7 &  64.2 \\
LogProb (Llama3-8B-Instruct)           &  49.2 &  63.6 &  53.1 &  65.3 \\ \hline
pTrue (Llama-3-8B-Instruct)             &  49.1$_{0.3}$ &  29.1$_{0.6}$ &  49.8$_{0.3}$ &  39.6$_{0.4}$ \\
pTrue (Mistral-7B-Instruct)                  &  50.0$_{0.0}$ &  0.0$_{0.0}$ &  50.0$_{0.0}$ &  0.0$_{0.0}$ \\
Lexical Similarity                           &  51.9$_{0.1}$ &  51.7$_{3.8}$ &  51.3$_{0.1}$ &  60.3$_{1.5}$ \\
Semantic Entropy                             &  49.3$_{0.9}$ &  44.0$_{9.7}$ &  49.9$_{0.8}$ &  43.5$_{5.5}$ \\ \hline
\textbf{Semantic Volume (ours)}              &  55.5$_{0.8}$ &  \textbf{66.5$_{0.5}$} &  \textbf{57.2$_{0.7}$} &  \textbf{67.5$_{0.5}$} \\ \hline
\end{tabular}}
\caption{Query Extension Model: \texttt{Qwen3-8B}. External uncertainty: Accuracy and F1 on CLAMBER and AmbigQA. Sampling‐based methods report mean $\pm$ std.\ over three trials (std.\ as subscript).}
\label{tab:Results-External-Qwen3-8B}
\end{table*}

\section{More Discussion}

To support broader adoption and ensure practical robustness, we discuss several core aspects of our design choices.

\paragraph{Coverage of Confident Predictions.}
Semantic Volume is designed to quantify model uncertainty in cases where elevated dispersion reflects ambiguity or error. A separate class of failure modes involves \emph{confidently incorrect responses}, where the model produces erroneous answers with high certainty. This is fundamentally a different problem: for example, if a model has been trained on faulty data (such as memorizing ``1+1=3'') then from its perspective, that answer is correct. In such cases, probing the model's token probabilities, hidden states, or sampled responses will not reveal any anomaly, because the internal uncertainty is low by design. Addressing these failures requires techniques that operate \emph{outside} the model, such as factual verification, training data auditing, or external consistency checks. As such, confidently wrong responses lie outside the scope of our current work and require complementary solutions beyond intrinsic uncertainty measures.

\paragraph{Efficiency in Sampling and Computation.}
Our approach relies on sampling perturbed queries or responses (typically $n = 20$), in line with other sampling-based metrics. The computation of the semantic volume itself is highly efficient, requiring only seconds per dataset split and scaling linearly with the number of examples. Appendix~\ref{sec:Appendix-Variation-n} shows that even with $n=10$, performance remains competitive. This enables a practical accuracy–cost trade-off. Moreover, our method does not require log-probabilities or gradient access, making it compatible with both open and closed LLM APIs.

\paragraph{Calibrated Decision Thresholds.}
To convert continuous scores into binary predictions (e.g. ambiguous or not), we fit a threshold using a small, labeled subset. This process is non-iterative, uses a closed-form objective (Proposition~\ref{prop:OptimalTauFormula}), and requires only a modest annotation effort. Our experiments show that a threshold derived from just 100 examples is sufficient and generalizes well across datasets and response models. This enhances usability in low-label or cross-domain settings.

\paragraph{Distributional Assumptions in Theory.}
Our theoretical connection between semantic volume and differential entropy relies on the assumption that embedding perturbations are approximately Gaussian after PCA. This provides a useful analytical interpretation, but is not strictly required for the log-determinant to act as a reliable dispersion measure. As shown in Appendix~\ref{sec:Appendix-Proofs}, PCA-normalized embeddings pass normality checks in the majority of cases, especially for query perturbations, supporting the practical soundness of the theoretical intuition.

\paragraph{Factuality Labeling Strategy.}
For hallucination detection, we use a ROUGE-L threshold to identify deviations from ground truth, following prior work. Although this heuristic may underestimate paraphrased but correct responses, a blind human validation on 1000 samples confirms over 95\% agreement with our automatic labeling (Section~\ref{sec:Experiment-Internal}). This indicates strong alignment between the heuristic and human judgment, offering a high-quality supervision signal at scale.

\end{document}